\PassOptionsToPackage{numbers,sort&compress}{natbib}

\documentclass{article}

\usepackage[final]{neurips_2025}

\usepackage[utf8]{inputenc} 
\usepackage[T1]{fontenc}    
\usepackage{hyperref}       
\usepackage{url}            
\usepackage{booktabs}       
\usepackage{amsfonts}       
\usepackage{nicefrac}       
\usepackage{microtype}      
\usepackage{xcolor}         
\usepackage{algorithm}
\usepackage{algorithmic}
\usepackage{wrapfig}

\usepackage{graphicx}

\usepackage{wrapfig}

\usepackage{amsmath}
\usepackage{amssymb}
\usepackage{mathtools}
\usepackage{amsthm}

\usepackage{subfigure}

\usepackage{multirow}
\usepackage{threeparttable}

\usepackage{listings}
\usepackage{xcolor}
\lstdefinestyle{compactpytorch}{
  language=Python,
  basicstyle=\small\ttfamily,
  commentstyle=\color{gray},
  keywordstyle=\color{blue},
  stringstyle=\color{green!50!black},
  numberstyle=\tiny\color{gray},
  breaklines=true,
  breakatwhitespace=true,
  showspaces=false,
  showstringspaces=false,
  columns=flexible,
  keepspaces=true,
  frame=single,
  framesep=3pt,
  xleftmargin=15pt,
  xrightmargin=15pt,
  belowskip=0.5\baselineskip,
  aboveskip=0.5\baselineskip,
  captionpos=b
}

\lstdefinestyle{pytorchstyle}{
    language=Python,
    basicstyle=\small\ttfamily,
    keywordstyle=\color{blue},
    commentstyle=\color{green!60!black},
    stringstyle=\color{red},
    showstringspaces=false,
    breaklines=true,
    frame=single,
    numbers=left,
    numberstyle=\tiny\color{gray},
    numbersep=5pt,
    xleftmargin=15pt,
}

\usepackage{amsthm}
\usepackage{bm}
\theoremstyle{plain}
\newtheorem{theorem}{Theorem}[section]
\newtheorem{lemma}[theorem]{Lemma}
\newtheorem{proposition}[theorem]{Proposition}
\newtheorem{corollary}[theorem]{Corollary}
\theoremstyle{definition}

\theoremstyle{remark}                  

\title{ZeroS: Zero‑Sum Linear Attention for Efficient Transformers}

\author{%
  Jiecheng Lu\textsuperscript{1†},{\tiny{ }}Xu Han\textsuperscript{2},{\tiny{ }}Yan Sun\textsuperscript{1},{\tiny{ }}Viresh Pati\textsuperscript{1},{\tiny{ }}Yubin Kim\textsuperscript{1},{\tiny{ }}Siddhartha Somani\textsuperscript{1},{\tiny{ }}Shihao Yang\textsuperscript{1‡}\\
  Georgia Institute of Technology\textsuperscript{1}, Amazon Web Services\textsuperscript{2} \\
  \texttt{jlu414@gatech.edu\textsuperscript{†},{\tiny{ }}shihao.yang@isye.gatech.edu\textsuperscript{‡}} \\
}

\begin{document}

\maketitle

\begin{abstract}
Linear attention methods offer Transformers $O(N)$ complexity but typically underperform standard softmax attention. We identify two fundamental limitations affecting these approaches: the restriction to convex combinations that only permits additive information blending, and uniform accumulated weight bias that dilutes attention in long contexts. We propose Zero-Sum Linear Attention (ZeroS), which addresses these limitations by removing the constant zero-order term $1/t$ and reweighting the remaining zero-sum softmax residuals. This modification creates mathematically stable weights, enabling both positive and negative values and allowing a single attention layer to perform contrastive operations. While maintaining $O(N)$ complexity, ZeroS theoretically expands the set of representable functions compared to convex combinations. Empirically, it matches or exceeds standard softmax attention across various sequence modeling benchmarks. The code implementation is available at this \href{https://github.com/LJC-FVNR/SequenceLab}{\underline{link}}.
\end{abstract}

\section{Introduction}

The Transformer architecture \citep{vaswani2017attention} has revolutionized sequence modeling across NLP, vision, speech, and reinforcement learning \cite{devlin2019bert,gpt3,radford2018gpt,dosovitskiy2020image,dong2018speech,chen2021decision}. While its self-attention mechanism offers exceptional modeling flexibility, the quadratic $O(N^2)$ complexity in both time and memory with sequence length $N$ limits its efficient implementation to long-context scenarios \cite{katharopoulos2020transformers,gu2023mamba}. Researchers have developed numerous linear-time attention mechanisms \cite{katharopoulos2020transformers,hua2022transformer,qin2022devil,qin2022cosformer,choromanski2020rethinking,yang2024gated} that preserve Transformer's strengths while scaling to longer sequences. Approaches include sparse attention patterns \cite{beltagy2020longformer,child2019generating,zaheer2020big}, kernel methods \cite{katharopoulos2020transformers,choromanski2020rethinking,yang2024gated}, low-rank approximation \cite{xiong2021nystromformer,wang2020linformer}, and efficient factorizations \cite{dao2022flashattention,dao2023flashattention}. Despite reducing from $O(N^2)$ to $O(N)$, these variants often underperform standard softmax attention, raising the question: Why do linear approximations save computation but sacrifice accuracy? Recent efforts to bridge this gap typically: 1) hybridize linear attention with local quadratic windows \cite{de2024griffin,wu2020lite}, 2) learn softmax matrix low-rank projections \cite{wang2020linformer,tay2021synthesizer}, or 3) sharpen linear kernels through normalization and gating \cite{gao2017properties,qin2022devil,qin2022cosformer}. While offer incremental gains, these approaches often compromise $O(N)$ efficiency, rely on task-specific hyperparameters, or introduce instabilities, limiting their practical use.

In this paper, we identify two fundamental limitations affecting linear and even softmax attention: 1) Bottleneck of convex combination \cite{ramachandran2019stand,richter2020normalized,baldi2023quarks}: softmax attention produces convex combinations of value vectors, with linear attention also aiming to achieve this primarily for numerical stability. However, these combinations can only blend information additively, unable to express subtractive or contrastive operations directly, forcing models to use multiple layers even for simple differencing tasks. 2) Uniform weight bias and attention dilution \cite{qin2022cosformer,qin2022devil,sun2023vicinity}: In long contexts, attention mechanisms incorporate a roughly uniform $\tfrac{1}{N}$ component in their weight expansion, introducing a persistent averaging effect that weakens focused attention and limits modeling of complex patterns. These limitations stem from the Taylor expansion $\exp(\bm{q} \cdot \bm{k})=1+\langle q,k\rangle+\tfrac12\langle q,k\rangle^{2}+\dots$, where the constant zero-order term enforces non-negativity for stability but creates an average-pooling bias that diminishes high-order token interactions. Rather than designing complex kernels to approximate softmax while preserving the constant term, we propose a simpler solution: \textbf{remove it}. Subtracting the uniform component creates naturally zero-sum weights that permit both positive and negative values, enabling contrastive updates and sharper attention distributions while maintaining stability.

From this insight, we introduce ZeroS (Zero-Sum linear attention), achieving linear complexity while matching or exceeding quadratic softmax attention performance through three key elements: 1) Zero-order subtraction: removing the uniform $1/t$ term from each softmax row to create stable zero-sum weights; 2) Radial–angular decoupling: separating magnitude from direction by applying learned gates to first-order (linear) and higher-order (non-linear) softmax residuals, then reintroducing signed $\cos\theta$ terms to restore directional effects; 3) Linear-time implementation: using separable logits and gating for the reweighted zero-sum softmax, combined with linearizable angular computations via prefix sums, maintaining $O(Nd^2)$ runtime and $O(d^2)$ memory.

Our contributions include: 1) Identifying why the uniform zero-order softmax term limits attention mechanisms and demonstrating that its removal is safe and beneficial. 2) Developing Zero-Sum Linear Attention (ZeroS), a linear-time attention supporting negative weights with theoretical stability independent of sequence length. 3) Proving ZeroS offers greater expressivity than convex combinations while maintaining numerical stability. 4) Demonstrating that ZeroS matches or exceeds standard softmax attention on various benchmarks while maintaining linear time complexity.

\section{Background}

\subsection{Preliminaries: Attention Mechanisms}

We consider an input token sequence of length $N$, represented by the feature matrix $\mathbf{X}\in\mathbb{R}^{N\times d}$, where each row $\bm{x}_t\in\mathbb{R}^{1\times d}$ is the embedding at time step $t$.  With $\mathbf{Q} = \mathbf{X}\mathbf{W}_q,\ \mathbf{K} = \mathbf{X}\mathbf{W}_k,\ \mathbf{V} = \mathbf{X}\mathbf{W}_v$, an autoregressive (causal) single‐head attention layer can be written in its matrix form as
\begin{equation*}
\label{eq:matrix_attn}
\mathrm{Attn}(\mathbf{X}) = \sigma\bigl(\mathbf{M}\odot(\mathbf{Q}\mathbf{K}^\top)\bigr)\,\mathbf{V}\,\mathbf{W}_o, \quad \mathbf{X}\leftarrow\mathbf{X}+\mathrm{Attn}\bigl(\mathrm{LN}(\mathbf{X})\bigr),
\end{equation*}
where $\mathbf{W}_q,\mathbf{W}_k,\mathbf{W}_v,\mathbf{W}_o\in\mathbb{R}^{d\times d}$ are learned projections, $\mathrm{LN}(\cdot)$ denotes layer normalization, and $\mathbf{M}\in\mathbb{R}^{N\times N}$ is the causal mask with
$\mathbf{M}_{ij} = 1\{i \geq j\} - \infty \cdot 1\{i < j\}$,
ensuring each position attends only to itself and the past.  When $\sigma$ is the row‐wise softmax with a $1/\sqrt{d}$ factor, this represents standard self‐attention with $O(N^2)$ complexity; replacing $\sigma$ by the linearized kernels yields the linear attention variants that can be computed in $O(N)$ \citep{katharopoulos2020transformers,yang2024gated}.  Omitting the causal mask $\mathbf{M}$ reverts this to encoder‐only attention, attending to all pairs of positions.

\textbf{Recurrent Form} \ 
Attention admits an equivalent step‐by‐step formulation.  At time $t$, let
$\bm{q}_t = \bm{x}_t\mathbf{W}_q,\quad
\bm{k}_t = \bm{x}_t\mathbf{W}_k,\quad
\bm{v}_t = \bm{x}_t\mathbf{W}_v.
$
Then the output $\bm{o}_t\in\mathbb{R}^{1\times d}$ is
$\bm{o}_t
= \frac{\sum_{i=1}^t \sigma(\bm{q}_t,\bm{k}_i)\,\bm{v}_i}
       {\sum_{i=1}^t \sigma(\bm{q}_t,\bm{k}_i)}$
where $\sigma(\bm{q},\bm{k})=\exp(\bm{q}\,\bm{k}^\top / \sqrt{d})$ for vanilla attention.  By choosing a kernel feature map $\phi(\cdot)$ such that
$\sigma(\bm{q}_t,\bm{k}_i)=\phi(\bm{q}_t)\,\phi(\bm{k}_i)^\top$,
the summations can be rearranged to maintain only the $d\times d$ hidden state
$\sum_{i=1}^t \phi(\bm{k}_i)^\top\bm{v}_i$, avoiding the full $N\times N$ matrix $\mathbf{Q}\mathbf{K}^\top$. This yields the linear attention formulation:
\(
\bm{o}_t = \frac{\phi(\bm{q}_t) \sum_{i=1}^t \phi(\bm{k}_i)^\top \bm{v}_i}{\phi(\bm{q}_t) \sum_{i=1}^t \phi(\bm{k}_i)^\top}.
\)
Replacing the summation limit $t$ with $N$ converts this from the decoder-only autoregression into a encoder-only global recurrence, summing over all positions.

\subsection{The intuition from existing linear attention research}

We begin with insights from previous research on linear attention to introduce two key elements of our ZeroS structure: 1) radial-angular decoupling, and 2) zero-sum reweighted softmax. In softmax attention, each value vector $\bm{v}_i$ is assigned a weight $\frac{\exp(\bm{q}_t \bm{k}_i)}{\sum_{i=1}^t \exp(\bm{q}_t \bm{k}_i)}$, forming a convex combination that ensures numerical stability by keeping outputs within the convex hull of $\{\bm{v}_i\}$ \cite{ramachandran2019stand,richter2020normalized,choromanski2021hybrid}. Linear attention variants attempt to approximate this using weights of linearized kernel form $\frac{\phi(\bm{q}_t) \phi(\bm{k}_i)}{\sum_{i=1}^t \phi(\bm{q}_t) \phi(\bm{k}_i)}$ \cite{katharopoulos2020transformers,qin2022devil,yang2024gated}. However, without constraining the sign of $\phi(\bm{q}_t) \phi(\bm{k}_i)$, this reduces to an affine combination that lacks the stability-ensuring bounds of convexity. While researchers have addressed this using non-negative feature maps like 1+ELU and ReLU \cite{katharopoulos2020transformers,qin2022cosformer,yang2024parallelizing,qin2023transnormerllm}, these stability-ensuring modifications still underperform compared to standard softmax attention \cite{katharopoulos2020transformers,qin2022devil,yang2024gated}.

\textbf{Coupling Interaction Between Radial and Angular Components} \ In softmax attention, the core weight term $\exp(\| \bm{q}_t \| \| \bm{k}_i \| \cos \theta)$ is controlled by both vector magnitudes and their angle $\theta$. Crucially, when cosine flips from positive to negative, large positive values transform into very small ones, with step $t$ and $i$ highly coupled within the exponential. In contrast, linear attention applies nonlinear mappings $\phi(\cdot)$ to query and key \cite{katharopoulos2020transformers,choromanski2021rethinking,qin2022cosformer}, calculating $\| \phi(\bm{q}_t) \| \| \phi(\bm{k}_i) \| \cos \theta'$. Since these mappings yield only positive values, angles between vectors become restricted to less than 90 degrees, and the angular representation loses its flipping effect—cosine values merely serve as smooth gating signals between (0, 1). Previous research shows minimal performance changes when replacing softmax with sigmoid, ReLU, or similar functions \cite{wortsman2023replacing,ramapuram2024theory,shen2023study,NEURIPS2023_b2e63e36,NEURIPS2023_274db6bf,yan2025sigmoid}, indicating that softmax attention's performance derives from modeling coupled angular and magnitude of $(t,i)$ pairs rather than from the exponential property itself. Therefore, when constructing linear attention, we should reimplement these complex interactions rather than attempting to approximate softmax or merely mimicking an inner product.

\textbf{Convexity of Sum-to-One Weights} \  Under this perspective, we revisit the convex combination in softmax attention, which primarily serves numerical stability by preserving norm regardless of sequence length. As weights become more uniform, output norm expectation decreases at approximately $1/\sqrt{t}$ with sequence length $t$ (assuming zero-mean vectors). However, these strictly positive weights mean input signals $\bm{v}_i$ can only contribute additively to outputs. In linear attention, without methods to suppress historical weights, this accumulation leads to attention dilution \cite{qin2022cosformer,qin2022devil,sun2023vicinity}, where uniform signals increasingly dominate as sequence length grows. While some approaches address this using local windows or convolutional methods \cite{gu2023mamba,yang2024parallelizing,dao2024transformers}, these represent engineering solutions rather than resolving fundamental limitations of positive weights. Studies \cite{10.1162/tacl_a_00306,brody2023expressivityrolelayernormtransformers,richter2020normalized,lv2025expressiveattentionnegativeweights} show that with softmax weights, a single attention layer cannot express differential or contrastive operations (even with just two tokens). The strictly positive convex combination inherently constrains ability to compress complex operations, limiting parameter efficiency. To enable more flexible parameterization with negative values, we must maintain numerical stability without relying on convex combinations' norm-preserving property while satisfying linear-time requirements.

\textbf{Flexible Weighting in Related Works}  Implementing both the angular flipping effect and expressiveness requires numerically stable modeling of negative weights. Previous research \cite{baldi2023quarks,han2024bridging} demonstrated that negative weights improve model performance, while Differential transformer \cite{ye2025differential} showed benefits from differencing two attention matrices to obtain flexible weights. In linear attention, operations that reduce or delete historical state matrix elements outperform simple accumulation approaches \cite{li2025gatingweightingunderstandinggated,yang2024parallelizing,yang2024gated,gu2023mamba,peng2023rwkv}. In the following sections, we will show that our ZeroS method constructs zero-sum weights based on softmax, improving performance while maintaining numerical stability compared to both standard and linear attention variants. Compared to previous linear attention, ZeroS enables more effective control of radial weights and decoupled angular components in $(t,i)$ pairs from step $t$ information.

\section{Methodology}

In this section,we demonstrate that using softmax residual terms with zero-sum weights (eliminating zero-order terms) and decoupling radial-angular components in linear attention achieves three key objectives: 1) enabling numerically stable negative weights in a single attention layer for expressing differential and contrastive operations, 2) capturing the essential length-angle interactions in attention weights that allow positive-negative flipping effects, and 3) permitting the current step $t$ to effectively influence shareable accumulated weights while maintaining linear time complexity. The overall architecture of the final ZeroS block introduced in this section is shown in Fig. \ref{architecture}.

\begin{figure}
  \centering
   \includegraphics[width=0.95\linewidth]
  {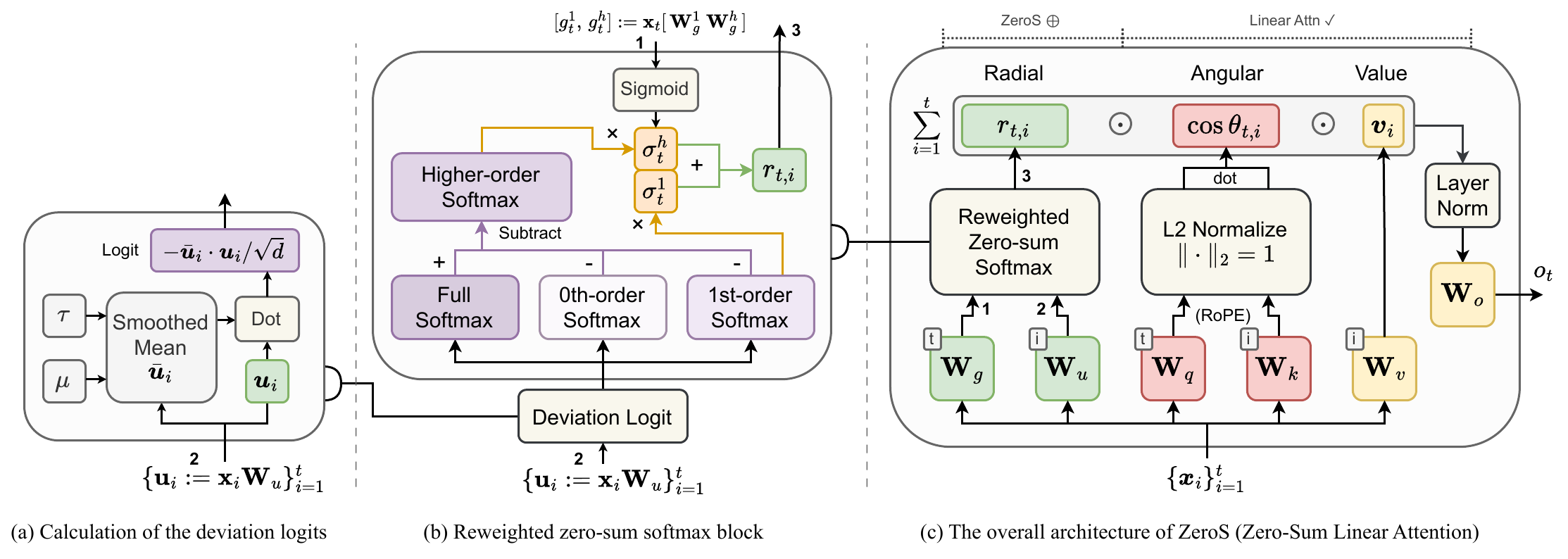}
  \caption{Illustration of the zero-sum linear attention block, including the computation of deviation logits and the reweighted zero-sum softmax operation}
  \label{architecture}
\end{figure}

\subsection{The Expansion of Softmax Function}
Recent research has attempted to approximate softmax using Taylor expansions \cite{NEURIPS2024_971f1e59,10.1007/978-3-031-78172-8_1,arora2025simplelinearattentionlanguage,qiu2023mbtaylorformermultibranchefficienttransformer}. For input scalars $\{s_i\}_{i=1}^t$, with $\bar s =\frac{1}{t}\sum_{j=1}^t s_j$ and $\delta_i =s_i - \bar s$, the second-order Taylor expansion is:
\[
\mathrm{softmax}(s_i)
\approx
\frac{1}{t}
+\frac{1}{t}\delta_i
+\frac{1}{2t}\Bigl(\delta_i^2 - \frac{1}{t}\sum_{j=1}^t\delta_j^2\Bigr)
+O(\|s\|^3).
\]
The zero-order term $\tfrac1t$ ensures $\sum_i\mathrm{softmax}(s_i)=1$, while first-order terms reflect linear response, and higher-order terms capture nonlinear interactions and competitive relationships between weights. Computing second-order terms based on $s_{t,i}=\bm{q}_t \bm{k}_i^\top$ would require $O(d^3)$ complexity \cite{10.1007/978-3-031-78172-8_1}, making them impractical. Our approach differs: we use logits that depend only on step $i$, calculate full softmax, zero and first-order terms, derive higher-order terms through their differentiation, and employ $t$-step-dependent gating factors to achieve interaction between $(t,i)$ pairs at different orders.

The zero-order baseline primarily provides accumulated magnitude measurement, contributing $\frac{1}{\sqrt{t}}$-level norm reduction and convexity properties. However, it enables no interaction between scores. Eliminating this term creates zero-sum residual weights with both positive and negative values reflecting interaction strength. While full softmax encodes higher-order competitive effects only through positive weight magnitudes, zero-sum residual weights directly express these relationships between vectors based on the positive and negative weights, emphasizing contrastive components.

\begin{proposition}[Convex vs.\ Zero-Sum Span]\label{lem:convex-vs-zero}
Let \(\{\bm v_i\}_{i=1}^t\subset \mathbb{R}^d\), and write
\(
\mathcal C
=\{\sum_i\alpha_i\bm v_i:
  \alpha_i\ge0,\;\sum_i\alpha_i=1\},
\ 
\mathcal Z
=\{\sum_i w_i\bm v_i:
  \sum_i w_i=0\},
\)
where we denote the \((t-1)\)-simplex by \(\Delta_{t-1}=\{\alpha\in \mathbb{R}^t:\alpha_i\ge0,\sum_i\alpha_i=1\}.\)
Then, letting \(\bm v_{\rm avg}=\tfrac1t\sum_i\bm v_i\),
\(
\{\sum_i\alpha_i\bm v_i-\bm v_{\rm avg}:\alpha\in\Delta_{t-1}\}
\subsetneq
\{\sum_i w_i\bm v_i:\sum_i w_i=0\},
\)
i.e.\ the zero-sum span of \(\{\bm v_i\!-\!\bm v_{\rm avg}\}\) strictly contains the deviations achievable by convex weights, with strictness whenever the \(\bm v_i\) are not all identical.
\end{proposition}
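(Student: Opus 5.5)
The plan is a direct set-theoretic argument in two parts: the inclusion, then strictness. No earlier result from the paper is needed.

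\emph{Inclusion.} Fix $\alpha\in\Delta_{t-1}$. Since $\bm v_{\rm avg}=\sum_i \tfrac1t \bm v_i$, we can rewrite
\[
\sum_i\alpha_i\bm v_i-\bm v_{\rm avg}=\sum_i\Bigl(\alpha_i-\tfrac1t\Bigr)\bm v_i .
\]
Set $w_i=\alpha_i-\tfrac1t$. Then $\sum_i w_i=1-1=0$, so the left-hand side lies in $\mathcal Z$. This gives the inclusion $\mathcal C-\bm v_{\rm avg}\subseteq\mathcal Z$. I will also note that for any zero-sum $w$ we have $\sum_i w_i(\bm v_i-\bm v_{\rm avg})=\sum_i w_i\bm v_i$, because the term $\bigl(\sum_i w_i\bigr)\bm v_{\rm avg}$ vanishes. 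Hence $\mathcal Z$ is indeed the zero-sum span of the centered vectors $\{\bm v_i-\bm v_{\rm avg}\}$, which justifies the wording in the statement.

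\emph{Strictness.} This is the only step with content, and I will argue by boundedness versus unboundedness. First, $\mathcal Z$ is the image of the linear subspace $\{w\in\mathbb R^t:\sum_i w_i=0\}$ under the linear map $w\mapsto\sum_i w_i\bm v_i$, so $\mathcal Z$ is a linear subspace of $\mathbb R^d$. If the $\bm v_i$ are not all identical, pick $j\neq k$ with $\bm v_j\neq\bm v_k$. Taking $w=e_j-e_k$ gives $\bm v_j-\bm v_k\in\mathcal Z$, and this vector is nonzero. Therefore $\mathcal Z$ contains the whole line $\{\lambda(\bm v_j-\bm v_k):\lambda\in\mathbb R\}$ and is unbounded.

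On the other hand, the convex deviations are bounded. By the triangle inequality,
\[
\Bigl\|\sum_i\alpha_i\bm v_i-\bm v_{\rm avg}\Bigr\|=\Bigl\|\sum_i\alpha_i(\bm v_i-\bm v_{\rm avg})\Bigr\|\le\max_i\|\bm v_i-\bm v_{\rm avg}\|=:R .
\]
Choosing $\lambda>R/\|\bm v_j-\bm v_k\|$ yields an element of $\mathcal Z$ of norm greater than $R$. Such an element is not a convex deviation, so the inclusion is strict.

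For the degenerate case in which all $\bm v_i$ coincide, both sets reduce to $\{\bm 0\}$, which explains why the hypothesis of non-identical vectors is needed for strictness. I do not expect any real obstacle here. The main point is to observe that $\mathcal Z$ is closed under scaling while the convex set is compact, rather than trying to construct an explicit zero-sum weight vector outside $\Delta_{t-1}-\tfrac1t\mathbf 1$. The latter would require checking non-uniqueness of representations whenever the $\bm v_i$ are affinely dependent.
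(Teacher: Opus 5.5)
Your proof is correct. Your inclusion step is the same rewriting the paper uses, $\sum_i\alpha_i\bm v_i-\bm v_{\rm avg}=\sum_i(\alpha_i-\tfrac1t)\bm v_i$, but your strictness step takes a different route.

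The paper uses the unscaled witness $\bm v_j-\bm v_k$ and argues by matching coefficients. If it were a convex deviation, one ``would have'' $\alpha_j-\tfrac1t=1$ and $\alpha_k-\tfrac1t=-1$, forcing $\alpha_k<0$. That step tacitly assumes zero-sum representations are unique, i.e.\ that the $\bm v_i$ are affinely independent. Without that assumption the witness can genuinely fail. For $t=3$ and $\bm v_1=0$, $\bm v_2=1$, $\bm v_3=2$ in $\mathbb R$, one has $\bm v_1-\bm v_2=-1=\bm v_1-\bm v_{\rm avg}$, which is attained by $\alpha=(1,0,0)$.

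Your argument compares the nontrivial subspace $\mathcal Z$ with the convex deviations, which are confined to the ball of radius $R=\max_i\|\bm v_i-\bm v_{\rm avg}\|$. As you anticipated, this avoids the non-uniqueness issue entirely. It proves the statement for every non-degenerate configuration, with the explicit witness $\lambda(\bm v_j-\bm v_k)$, $\lambda>R/\|\bm v_j-\bm v_k\|$.

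Where the paper's version is valid, it gives a sharper fact: the two-token difference $\bm v_j-\bm v_k$ itself, unscaled, is unreachable by a convex head. That is the contrastive example motivating the paper. The boundedness idea you use appears in the paper only as a remark in Corollary~\ref{cor:expressive}, and there again under affine independence.
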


\begin{corollary}[Expressive Gain of Zero-Sum Attention]\label{prop:expressive-zero}
In a residual block \(\bm x_t\mapsto \bm x_t + \sum_i w_i\bm v_i\), softmax weights \(w_i=\alpha_i\) yield head deviations in \(\{\sum_i\alpha_i\bm v_i-\bm v_{\rm avg}:\alpha\in\Delta_{t-1}\}\); zero-order subtraction \(w_i=\alpha_i-\tfrac1t\) yields head deviations in \(\{\sum_iw_i\bm v_i:\sum_i w_i=0\}\). Since
\(
\{\sum_i\alpha_i\bm v_i-\bm v_{\rm avg}\}
\subsetneq
\{\sum_i w_i\bm v_i:\sum_i w_i=0\},
\) zero-sum attention enlarges the set of deviation vectors the head can produce (and hence its expressivity), only without the uniform average direction.
\end{corollary}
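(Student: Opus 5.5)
The plan is to prove the inclusion directly, then prove strictness by a scaling argument, and finally read off the corollary.

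\textbf{Inclusion.} Fix $\alpha\in\Delta_{t-1}$ and rewrite
\[
\sum_i\alpha_i\bm v_i-\bm v_{\rm avg}=\sum_i\bigl(\alpha_i-\tfrac1t\bigr)\bm v_i .
\]
Set $w_i=\alpha_i-\tfrac1t$. Then $\sum_i w_i=1-1=0$, so every convex deviation lies in the zero-sum span $\mathcal Z$.

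Before the strictness step, I would record that $\mathcal Z$ is exactly the linear span of the differences $\{\bm v_i-\bm v_j\}$, equivalently of $\{\bm v_i-\bm v_{\rm avg}\}$. Indeed, if $\sum_i w_i=0$, then
\[
\sum_i w_i\bm v_i=\sum_i w_i(\bm v_i-\bm v_{\rm avg}).
\]
This identity also justifies the statement's phrasing ``zero-sum span of $\{\bm v_i-\bm v_{\rm avg}\}$.''

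\textbf{Strictness.} The idea is that the convex-deviation set $D=\{\sum_i\alpha_i\bm v_i-\bm v_{\rm avg}:\alpha\in\Delta_{t-1}\}$ is bounded, while $\mathcal Z$ is a linear subspace. Boundedness is immediate: $D$ is the continuous image of the compact simplex, and concretely every element has norm at most $\max_i\|\bm v_i-\bm v_{\rm avg}\|$ by the triangle inequality. Now suppose the $\bm v_i$ are not all identical, say $\bm v_1\neq\bm v_2$. Then $\bm u=\bm v_1-\bm v_2\in\mathcal Z$ is nonzero, so $\lambda\bm u\in\mathcal Z$ for every $\lambda\in\mathbb R$. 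Choosing $\lambda$ large, for instance $\lambda>\max_i\|\bm v_i-\bm v_{\rm avg}\|/\|\bm u\|+1$, gives an element of $\mathcal Z\setminus D$. This establishes $D\subsetneq\mathcal Z$.

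I would also note the degenerate case. If all $\bm v_i$ coincide, then $D=\mathcal Z=\{\bm 0\}$, so the containment is only non-strict, consistent with the hypothesis in the statement. As an alternative witness that avoids scaling, one could use the sign-flipped point $-(\bm v_{i^*}-\bm v_{\rm avg})$, where $i^*$ maximizes the support function of $D$ in some direction. However, this requires $D$ not to be centrally symmetric and is fussier, so I will use the scaling argument.

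\textbf{Corollary.} The corollary follows by substitution. With softmax weights the head deviation from the mean is $\sum_i\alpha_i\bm v_i-\bm v_{\rm avg}\in D$. With zero-order subtraction the weights satisfy $\sum_i w_i=0$, so the output lies in $\mathcal Z$. The strict containment above then gives the claimed enlargement.

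I expect no real obstacle. The only points needing care are (i) stating precisely that the strictness witness must leave the bounded set $D$, which the explicit norm bound handles, and (ii) clarifying that the ``span'' in the statement means the full linear subspace $\mathcal Z$ and not a bounded subset of it.
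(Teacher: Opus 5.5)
Your inclusion step and the final substitution are exactly the paper's: it rewrites $\sum_i\alpha_i\bm v_i-\bm v_{\rm avg}=\sum_i(\alpha_i-\tfrac1t)\bm v_i$ and then invokes Proposition~\ref{lem:convex-vs-zero}. The two routes differ at the strictness step. The paper picks $\bm v_j\neq\bm v_k$ and takes the witness $\bm v_j-\bm v_k$ with coefficients $w_j=1$, $w_k=-1$. It then argues that membership in $\mathcal C_{\rm dev}$ would force $\alpha_k=-1+\tfrac1t<0$. This rules out only that one coefficient vector, so it tacitly assumes the zero-sum representation is unique. Uniqueness holds under affine independence but not in general: for $d=1$ and $(\bm v_1,\bm v_2,\bm v_3)=(0,1,2)$ one has $\mathcal C_{\rm dev}=[-1,1]$, and $\bm v_1-\bm v_2=-1$ is attained by $\alpha=(1,0,0)$. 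Your bounded-versus-subspace argument avoids this: every element of $D$ has norm at most $\max_i\|\bm v_i-\bm v_{\rm avg}\|$, while $\mathcal Z$ contains the whole line through $\bm v_1-\bm v_2\neq\bm 0$. It needs no uniqueness and gives strictness under exactly the stated hypothesis. The paper uses this contrast only informally, in Corollary~\ref{cor:expressive} under affine independence. Where the paper's route is valid, it buys a witness on the same scale as $D$, excluded because it needs a weight below $-\tfrac1t$. That fits the contrastive motivation better than your witness, which is excluded purely by magnitude. One caveat you share with the paper: ``the zero-sum output lies in $\mathcal Z$'' together with $D\subsetneq\mathcal Z$ does not by itself show the head reaches more than $D$. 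Literally setting $w_i=\alpha_i-\tfrac1t$ reproduces exactly $D$. The enlargement therefore presumes the head can realize zero-sum weights with some $w_i<-\tfrac1t$, as the gated reweighting is meant to allow.
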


Zero-sum weights can express more complex interactions after removing the zero-order term, with expressivity reduction only in the orthogonal direction of $\bm{v}_{\text{avg}}$. This direction typically represents the lowest-cost basis since it requires only average pooling. We can recover this capability through multiple attention heads and layer stacking. To strictly ensure this direction is not lost, our implementation retains the zero-order term in the first layer, removing it in subsequent layers as described below.

\subsection{Reweighted Zero-sum Softmax}

We define the reweighted zero-sum softmax operation. For logit input $s_{t,i}$ at step $t$, we compute:
$
\bar s_t=\tfrac1t\sum_{j=1}^ts_{t,j},\ 
\delta_{t,i}=s_{t,i}-\bar s_t .
$
Subtracting zero-order ($1/t$) and first-order ($\delta_{t,i}/t$) terms from softmax yields the residual:
$$
\varepsilon_{t,i} = \frac{\exp(s_{t,i})}{\sum_{i=1}^t \exp(s_{t,i})} - \frac1t - \frac{\delta_{t,i}}{t} = O(\|\delta\|^2),
$$
where $\sum_i \varepsilon_{t,i}=0$ and $\sum_i \delta_{t,i}/t = 0$. We gate these components using learned scalars $\sigma_t^1=\text{sigmoid}(g_t^{1})$ and $\sigma_t^h=\text{sigmoid}(g_t^{h})$, defining zero-sum weights:
$$
w_{t,i}=\sigma_t^1 \frac{\delta_{t,i}}{t}+\sigma_t^h \varepsilon_{t,i},
\quad
\sum_{i=1}^t w_{t,i}=0.
$$
This form assigns two gating weights: one for the first-order orthogonal direction and another for all directions of second-order and above. For the first attention layer, we can optionally preserve the zero-order term using $\sigma_t^0=\text{tanh}(g_t^{0})$, giving $w_{t,i}'=\sigma_t^0 \frac1t + \sigma_t^1 \frac{\delta_{t,i}}{t}+\sigma_t^h \varepsilon_{t,i}$, though experiments show this has minimal impact across most tasks.

\textbf{Remark.} A key advantage of this formulation for linear-time attention is that even with logits $s_{t,i}=s_i$ that are independent of $t$, we can still control interactions of different orders in the final weights $w_{t,i}$ through the $t$-step gating mechanism $\sigma_t^{(\cdot)}$ across orthogonal directions from softmax expansion. This gate reweighting approach $w_{t,i}=\sigma_t^1 \frac{\delta_{i}}{t}+\sigma_t^h \varepsilon_{i}$ effectively replaces the traditional linearization that decomposes $\exp(\bm{q}_t \bm{k}_i)$ into $\phi(\bm{q}_t) \phi(\bm{k}_i)$.

\begin{proposition}[Preservation of Affine Hull and Expressivity]\label{lem:preserv-affine-hull}
Let $\{\bm v_i\}_{i=1}^t\subset\mathbb{R}^d$ and write
\(
\bm v_{\rm avg}\;=\;\frac1t\sum_{i=1}^t \bm v_i,
\ \Delta_i\;=\;\bm v_i-\bm v_{\rm avg}.
\)
A single head with full softmax (or full reweighted softmax with the zero‐order term kept) can produce any point in the affine hull \[
\mathrm{Aff}\{\bm v_1,\dots,\bm v_t\}
\;=\;\Bigl\{\bm v_{\rm avg}+\sum_{i=1}^t\alpha_i\,\Delta_i:
  \sum_{i=1}^t\alpha_i=1\Bigr\}.
\] A single head \emph{without} the zero‐order term (i.e.\ zero‐sum weights) can produce any point in the linear span \[
\mathrm{Span}\{\Delta_1,\dots,\Delta_t\}
\;=\;\Bigl\{\sum_{i=1}^t w_i\,\Delta_i:
  \sum_{i=1}^t w_i=0\Bigr\}.
\]
Therefore, if you use one head (or one layer) that retains the zero‐order term and then stack one or more heads (layers) that subtract it, the Minkowski sum of their reachable sets is exactly
\[
\mathrm{Aff}\{\bm v_i\}
\;+\;\mathrm{Span}\{\Delta_i\}
\;=\;
\mathrm{Aff}\{\bm v_i\}.
\]
\end{proposition}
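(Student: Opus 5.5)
The plan is to treat the statement as three separate linear-algebra claims and verify each one directly. I will use only the identity $\sum_i \Delta_i = 0$, which follows immediately from the definition of $\bm v_{\rm avg}$. I also read "can produce" as ranging over all admissible weight vectors. For the first claim this means all $\alpha$ with $\sum_i\alpha_i=1$: the reweighted softmax with the gated zero-order term $\sigma^0_t/t$ is not confined to the simplex, because the residual terms $\delta_{t,i}/t$ and $\varepsilon_{t,i}$ carry signs. For the second claim it means all zero-sum $w$.

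\textbf{Step 1 (affine hull).} For any $\alpha$ with $\sum_i\alpha_i=1$, I will use $\sum_i\alpha_i\bm v_{\rm avg}=\bm v_{\rm avg}$ to get
\[
\sum_i\alpha_i\bm v_i=\bm v_{\rm avg}+\sum_i\alpha_i\Delta_i .
\]
So the set written in the statement is exactly $\mathrm{Aff}\{\bm v_i\}$ in its usual form. Every affine weight vector corresponds to some head output, which gives the first claim.

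\textbf{Step 2 (zero-sum span).} For zero-sum $w$ I will use $\sum_i w_i\bm v_i=\sum_i w_i\Delta_i$, since the $\bm v_{\rm avg}$ contributions cancel. I then need the equality of sets
\[
\Bigl\{\sum_i w_i\Delta_i:\sum_i w_i=0\Bigr\}=\mathrm{Span}\{\Delta_i\}.
\]
The inclusion $\subseteq$ is trivial. For $\supseteq$, take arbitrary coefficients $c$ and replace them with $w_i=c_i-\bar c$, where $\bar c=\tfrac1t\sum_j c_j$. This changes the combination by $-\bar c\sum_i\Delta_i=0$, so the point is unchanged while $\sum_i w_i=0$. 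Proposition~\ref{lem:convex-vs-zero} already gives the identification of zero-sum combinations with deviations, so I can cite it here.

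\textbf{Step 3 (Minkowski sum).} I will show $\mathrm{Aff}\{\bm v_i\}+\mathrm{Span}\{\Delta_i\}=\mathrm{Aff}\{\bm v_i\}$ by double inclusion.
\begin{itemize}
\item[$\supseteq$:] Take $0\in\mathrm{Span}$.
\item[$\subseteq$:] Take a sum $\bigl(\bm v_{\rm avg}+\sum_i\alpha_i\Delta_i\bigr)+\sum_i w_i\Delta_i$. It equals $\bm v_{\rm avg}+\sum_i(\alpha_i+w_i)\Delta_i$, and the combined coefficients still satisfy $\sum_i(\alpha_i+w_i)=1$. So the sum lies in the affine hull as written in Step 1.
\end{itemize}
Equivalently, $\mathrm{Span}\{\Delta_i\}$ is the direction space of the affine hull, and an affine subspace is invariant under translation by its own direction space. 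For the layered interpretation, I will note that in the residual stream the outputs of the first head (zero-order term kept) and later heads (zero-order term subtracted) add. Treating the $\bm v_i$ as fixed across these heads, which is the idealization implicit in the statement, the reachable set is this Minkowski sum.

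\textbf{Main obstacle.} The algebra is routine. The real issue is justifying the word "any" in the first claim. Ordinary softmax alone reaches only the relative interior of the convex hull, not the full affine hull. The statement therefore holds only with the gated reweighting, or in the idealized sense of unrestricted affine weights. I would handle this by stating that interpretation explicitly at the outset. If needed, I would add a one-line argument that the gates together with signed $\delta,\varepsilon$ terms let the weights range over a set whose affine combinations cover the needed directions.
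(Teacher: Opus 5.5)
Your algebra is correct, and Step 2 is essentially the paper's argument: recenter arbitrary coefficients by their mean and use $\sum_i\Delta_i=0$. Where you differ is Step 3 and the role of the first head.

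The paper's proof does not try to show that softmax reaches all of $\mathrm{Aff}\{\bm v_i\}$. Its part (i) records only $\mathrm{Conv}\{\bm v_i\}\subseteq\mathrm{Aff}\{\bm v_i\}$. Its part (iii) proves $\mathrm{Conv}\{\bm v_i\}+\mathrm{Span}\{\Delta_i\}=\mathrm{Aff}\{\bm v_i\}$ by writing any $\bm y$ in the affine hull as $\bm v_{\rm avg}+(\bm y-\bm v_{\rm avg})$. Here $\bm v_{\rm avg}\in\mathrm{Conv}\{\bm v_i\}$ (it is the softmax output at equal logits) and $\bm y-\bm v_{\rm avg}\in\mathrm{Span}\{\Delta_i\}$. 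A single reachable point of the affine hull therefore suffices, and this dissolves your ``main obstacle'': the stacking conclusion holds with what a softmax head actually produces, with no idealized reading of ``any''.

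Your route instead proves the displayed identity $\mathrm{Aff}+\mathrm{Span}=\mathrm{Aff}$ literally. Your $\subseteq$ direction also supplies the inclusion the paper leaves implicit, namely that a first-head output plus a zero-sum combination stays in the affine hull. But your link to layering passes through the first head reaching all of $\mathrm{Aff}\{\bm v_i\}$, which you yourself note fails for softmax. Your $\subseteq$ argument together with the paper's one-point decomposition gives the cleanest complete proof.

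Two remarks on how you justify the Step 1 reading. The paper's route makes that reading unnecessary, but the justification should not stand as written.

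\begin{itemize}
\item The gated variant $w'_{t,i}=\sigma^0_t/t+\sigma^1_t\delta_{t,i}/t+\sigma^h_t\varepsilon_{t,i}$ has $\sum_i w'_{t,i}=\sigma^0_t$, which a $\tanh$ or sigmoid gate never makes equal to $1$. So its weights are not affine weights. Its outputs lie in $\sigma^0_t\bm v_{\rm avg}+\mathrm{Span}\{\Delta_i\}$, which is in general a different translate than $\mathrm{Aff}\{\bm v_i\}$. Signed affine weights come from keeping the zero-order term with coefficient one.
\item Your fallback, that the reachable weights have affine combinations covering the needed directions, would not suffice, because a single head outputs one weight vector. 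You need the reachable weight set itself to be the whole hyperplane $\sum_i w_i=1$. For the coefficient-one version with free logits and $\sigma^1_t\neq\sigma^h_t$ this does hold. The map $\delta\mapsto(\sigma^1_t-\sigma^h_t)\delta/t+\sigma^h_t\bigl(\mathrm{softmax}(\delta)-\mathbf{1}/t\bigr)$ is an invertible linear map of the zero-sum hyperplane plus a bounded continuous term, hence onto by Brouwer.
\end{itemize}
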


In other words, after the first full attention layer, it already cover the entire affine hull, and the subsequent zero‐sum attentions do not shrink that. The overall network can still express any affine combination of the \(\bm v_i\).  

\textbf{Residual Stream Alignment} \ When value vectors $\{v_i\}$ are centered and i.i.d., zero-sum attention produces $o_t = \sum_{i=1}^t w_{t,i}v_i$ with $\sum_i w_{t,i}=0$ and $\mathbb{E}[o_t]=0$. This aligns with decoder-only Transformer's residual stream ideology where $x_t \leftarrow x_t + \mathrm{Attn}(x_t)$ should provide pure updates without constant bias. Subtracting the zero-order term naturally centers these residuals, improving training stability.

We now show that the reweighted zero-sum softmax achieves the same level of numerical stability as the original softmax.

\begin{lemma}[Numerical Stability of Zero-Sum Softmax]\label{lem:stability}
Let \(w_{t,i}\) be the reweighted zero-sum softmax weights with \(\sum_i w_{t,i}=0\), and assume each value vector satisfies \(\|v_i\|\le B\). Then for any step \(t\),
\[
\Bigl\|\sum_{i=1}^t w_{t,i}v_i\Bigr\|
\;\le\;\max_i |w_{t,i}|\;\sum_{i=1}^t\|v_i\|
\;\le\;B\,t\,\max_i|w_{t,i}|.
\]
Moreover, since \( |w_{t,i}|\le\max\bigl(\tfrac1t,\tfrac{|\delta_{t,i}|}{t},\tfrac{|\rho_{t,i}|}{2t}\bigr)\) and \(\delta,\rho=O(1)\) under bounded logits, we have \(\max_i|w_{t,i}|=O(1/t)\) and hence \(\bigl\|\sum_i w_{t,i}v_i\bigr\|=O(B)\), independent of \(t\).
\end{lemma}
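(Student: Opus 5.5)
The argument has two parts: a deterministic norm inequality, and then an entrywise bound on the weights.

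For the first inequality I will apply the triangle inequality and Hölder's inequality in the form $\ell_\infty$–$\ell_1$:
\[
\Bigl\|\sum_i w_{t,i}v_i\Bigr\|\le\sum_i|w_{t,i}|\,\|v_i\|\le\max_i|w_{t,i}|\sum_i\|v_i\|.
\]
Bounding each $\|v_i\|$ by $B$ then gives the factor $Bt$. The zero-sum property is not needed here. I would note in passing that it permits the sharper bound $\|\sum_i w_{t,i}(v_i-v_{\rm avg})\|$, but I would not rely on it.

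The substantive step is $\max_i|w_{t,i}|=O(1/t)$. I would work from the definition $w_{t,i}=\sigma^1_t\,\delta_{t,i}/t+\sigma^h_t\,\varepsilon_{t,i}$, with gates in $(0,1)$. This gives $|w_{t,i}|\le|\delta_{t,i}|/t+|\varepsilon_{t,i}|$, which is at most twice the stated maximum, so the same $O(1/t)$ conclusion follows. Assume the logits are bounded, $|s_{t,j}|\le S$. Then $|\delta_{t,i}|\le 2S$, so the first-order term is at most $2S/t$.

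For the residual, I would write the softmax weight as $p_{t,i}=e^{\delta_{t,i}}/\sum_j e^{\delta_{t,j}}$, which is shift invariant. Each $e^{\delta}$ lies in $[e^{-2S},e^{2S}]$, so $p_{t,i}\le e^{4S}/t$. Then
\[
|\varepsilon_{t,i}|\le p_{t,i}+\frac1t+\frac{|\delta_{t,i}|}{t}\le\frac{e^{4S}+1+2S}{t}.
\]
If I want to match the stated form $|\rho_{t,i}|/(2t)$, I would define
\[
\rho_{t,i}:=2t\,\varepsilon_{t,i}=2\Bigl(\frac{e^{\delta_{t,i}}}{\tfrac1t\sum_j e^{\delta_{t,j}}}-1-\delta_{t,i}\Bigr).
\]
By the bound above, $\rho_{t,i}$ is $O(1)$ uniformly in $t$. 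The denominator $\tfrac1t\sum_j e^{\delta_{t,j}}$ is an average lying in $[e^{-2S},e^{2S}]$, so it never degenerates. A second-order Taylor expansion also shows $\rho_{t,i}\approx\delta_{t,i}^2-\overline{\delta^2}$ for small logits, which is consistent with the displayed expansion.

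Combining the pieces gives $\max_i|w_{t,i}|\le C(S)/t$. Plugging this into the first inequality yields $\|\sum_i w_{t,i}v_i\|\le C(S)B$, which is independent of $t$.

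The main obstacle is making precise the statement's "$\delta,\rho=O(1)$ under bounded logits." The key fact is that the softmax normaliser is a mean of $t$ terms bounded away from zero. This is what turns $p_{t,i}$ into $O(1/t)$ rather than merely $O(1)$. I would handle it with the uniform-ratio bound $p_{t,i}\le e^{4S}/t$ rather than through a Taylor remainder, because the Taylor route would require small logits.
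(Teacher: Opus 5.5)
Your proof is correct and has the same skeleton as the paper's. The triangle inequality gives $\|\sum_i w_{t,i}v_i\|\le Bt\max_i|w_{t,i}|$, and everything then rests on showing $\max_i|w_{t,i}|=O(1/t)$.

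Where you differ is that second step. The paper does not derive it. It cites ``the zero-sum softmax construction'' for $|w_{t,i}|\le\max\{1/t,|\delta_{t,i}|/t,|\rho_{t,i}|/(2t)\}$ and takes $\delta,\rho=O(1)$ as given, without ever defining $\rho$. You prove the bound with an explicit constant $C(S)$. Your ratio bound $p_{t,i}\le e^{4S}/t$ is exactly the missing fact that makes the residual $O(1/t)$ uniformly in $t$.

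Using the sum form $|w_{t,i}|\le|\delta_{t,i}|/t+|\varepsilon_{t,i}|$ rather than the max form is also the right choice. Take either natural reading of $\rho$: $2t\varepsilon_{t,i}$, or the second-order term $\delta_{t,i}^2-\overline{\delta^2}$. Under both, the max form can fail. For $t=3$, $s=(2,0,0)$ and both gates close to $1$, you get $w_{t,1}\approx p_{t,1}-\tfrac13\approx 0.454$. Meanwhile $\max\{1/t,|\delta_{t,1}|/t,|\rho_{t,1}|/(2t)\}=4/9\approx 0.444$.

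Only the order $1/t$ matters, so the factor of $2$ you give up costs nothing. Your argument is a complete proof of the lemma's conclusion, where the paper's is only a sketch.
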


With controllable logits generation methods independent of $t$, such as the scaled dot-product in original softmax attention, the numerical stability of the above method remains well-controlled. This allows us to employ zero-sum weights that permit negative values while still achieving numerically stable outputs, even without the norm-preserving property of convex combinations.

\begin{proposition}[Uniform Lipschitz Bound of Zero-Sum Softmax with decay factor $1/\sqrt{t}$]\label{prop:lipschitz-updated}
Assume each value vector satisfies \(\|\bm v_i\|\le B\), each pre-softmax logit \(s_{t,i}(\bm x)\in[-S,S]\) is \(L_s\)-Lipschitz in the input \(\bm x\), each residual weight \(w_{t,i}(\bm x)\) obeys the scaling\(
    |\,w_{t,i}(\bm x)-w_{t,i}(\bm x')|
    \;\le\;
    \frac{L_w}{t}\,\|\bm x-\bm x'\|,
\) for some constant \(L_w\) depending only on \(S\) and the sigmoid gates. Let the head output be \(
\bm o_t(\bm x)
=\frac{1}{\sqrt t}\sum_{i=1}^t w_{t,i}(\bm x)\,\bm v_i,\ \sum_{i=1}^t w_{t,i}(\bm x)=0.\) Then for any two inputs \(\bm x,\bm x'\),
\[
\bigl\|\bm o_t(\bm x)-\bm o_t(\bm x')\bigr\|
\;\le\;\frac{B\,L_w}{\sqrt t}\,\|\bm x-\bm x'\|.
\]
The zero-sum update is \(\ell_2\)-Lipschitz in its inputs with constant \(O(1/\sqrt t)\), ensuring stable gradients and activations independent of sequence length.
\end{proposition}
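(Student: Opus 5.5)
Under the stated hypotheses the claim follows from a short direct computation. The weight-Lipschitz scaling is already assumed. I will not derive it from the gates; I only note that it is consistent with the form \(w_{t,i}=\sigma_t^1\delta_{t,i}/t+\sigma_t^h\varepsilon_{t,i}\) when logits lie in \([-S,S]\).

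The plan is to write the difference of outputs as a single sum over the value vectors:
\[
\bm o_t(\bm x)-\bm o_t(\bm x')=\frac{1}{\sqrt t}\sum_{i=1}^t\bigl(w_{t,i}(\bm x)-w_{t,i}(\bm x')\bigr)\bm v_i .
\]
Here I treat the \(\bm v_i\) as fixed with norm at most \(B\). I read the statement as measuring the sensitivity through the weights \(w_{t,i}\), since the Lipschitz hypothesis is placed on the weights.

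Next I apply the triangle inequality together with \(\|\bm v_i\|\le B\) and the hypothesis \(|w_{t,i}(\bm x)-w_{t,i}(\bm x')|\le \frac{L_w}{t}\|\bm x-\bm x'\|\). This gives
\[
\|\bm o_t(\bm x)-\bm o_t(\bm x')\|\le\frac{1}{\sqrt t}\sum_{i=1}^t\frac{L_w}{t}\|\bm x-\bm x'\|\,B=\frac{B L_w}{\sqrt t}\|\bm x-\bm x'\|.
\]
The \(t\) terms cancel the \(1/t\) from the weight scaling, and the explicit \(1/\sqrt t\) factor survives. That is exactly the claimed bound. The zero-sum constraint is not needed for this estimate. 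It only ensures that the difference vector \(w(\bm x)-w(\bm x')\) also sums to zero, so one may replace \(\bm v_i\) by \(\bm v_i-\bm c\) for any \(\bm c\) without changing the difference. I will remark on this, noting that centering could sharpen \(B\) to \(\max_i\|\bm v_i-\bm v_{\rm avg}\|\).

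The main obstacle is justifying why \(L_w/t\) is a reasonable scaling, in case the reader wants it derived rather than assumed. For this I would argue as follows.

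\begin{itemize}
\item \emph{First-order term.} The map \(s\mapsto\delta_{t,i}/t\) is \(2/t\)-Lipschitz per coordinate, so each weight inherits \(O(L_s/t)\) from the \(L_s\)-Lipschitz logits.
\item \emph{Softmax term.} The softmax Jacobian has entries \(p_i(\mathbb 1_{ij}-p_j)\). With \(p_i\le e^{2S}/t\), the entry magnitudes are \(O(e^{2S}/t)\).
\item \emph{Remaining terms and gates.} The terms \(1/t\) and \(\delta/t\) are handled as above. The gates are sigmoids of Lipschitz inputs, bounded by \(1/4\) in derivative, multiplying quantities of size \(O(1/t)\) by Lemma~\ref{lem:stability}.
\end{itemize}

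A subtlety is that a per-coordinate Jacobian bound of \(O(1/t)\) per entry, summed over \(j\), could give \(O(1)\) rather than \(O(1/t)\). For the softmax term this is avoided by using \(\sum_j p_j=1\). For \(\delta/t\), the dependence on \(\bar s\) averages out. I would present this only as a remark supporting the hypothesis, since the proposition takes the scaling as given.
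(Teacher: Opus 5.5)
Your argument is correct and is essentially the paper's proof: write the output difference as $\frac{1}{\sqrt t}\sum_i \bigl(w_{t,i}(\bm x)-w_{t,i}(\bm x')\bigr)\bm v_i$, then apply the triangle inequality with $\|\bm v_i\|\le B$ and the assumed $L_w/t$ weight scaling, and sum the $t$ terms to get $B L_w/\sqrt t$. Your additional remarks go beyond the paper and are not needed for the stated claim; they cover the zero-sum constraint being unnecessary here, the centering refinement of $B$, and the heuristic justification of the $L_w/t$ scaling.
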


This proposition introduces a $1/\sqrt{t}$ decay factor that ensures reweighted zero-sum softmax maintains variance reduction similar to convex combinations, promoting training stability. However, since linear attention methods typically apply Layer Normalization to control output variance, LayerNorm effectively supersedes this factor and is sufficient to ensure gradient stability during training.

\textbf{Reweighted Zero-sum Softmax in Linear Time} \ To achieve linear-time computation, we simplify logits from $s_{t,i}$ to $s_i$ by removing t-dependency. While we could use basic forms like $s_i = \bm{x}_i \mathbf{W}_s^{d \times 1}$ or quadratic forms $s_i = \bm{x}_i \mathbf{W}_s \mathbf{W}_s^\top \bm{x}_i^\top / d$ to emulate dot-products, we instead propose a design with a more meaningful representation.

We want these logits to express the deviation of step $i$ relative to previous steps. We calculate the negative inner product between each step's vector $\bm{u}_i = \bm{x}_i \mathbf{W}_u$ and its cumulative average. For better assessment of initial steps, we introduce trainable parameters $\mu \in \mathbb{R}^{1 \times d}$ and $\tau \in \mathbb{R}$ as a smoothing prior, calculating deviation logits $s_i$ as:
$$s_i = - \frac1{\sqrt{d}} \bm{u}_i \bar{\bm{u}}_i^\top, \  \text{where } \bar{\bm{u}}_i = \frac{e^\tau \mu + \sum_{j=1}^i \bm{u}_j}{e^\tau + i}$$
Let $r_{t,i}$ represent the final computed reweighted softmax result:
$$
\delta_{t,i}=s_{i}-\bar s_t, \ \varepsilon_{t,i} = \frac{\exp(s_{i})}{\sum_{i=1}^t \exp(s_{i})} - \frac1t - \frac{\delta_{t,i}}{t}, 
$$
$$
r_{t,i}=\sigma_t^1 \frac{\delta_{t,i}}{t}+\sigma_t^h \varepsilon_{t,i} = \frac{\sigma_t^h}{\sum_{i=1}^t \exp(s_{i})}\exp(s_{i}) + \frac{(\sigma_t^1 - \sigma_t^h)}{t}(s_{i}-\bar s_t) - \frac{\sigma_t^h}t
$$
where $\sigma_t^1 = \text{sigmoid}(\bm{x}_t \mathbf{W}_{g}^1), \ \sigma_t^h = \text{sigmoid}(\bm{x}_t \mathbf{W}_{g}^h), \ \mathbf{W}_{g}^1, \mathbf{W}_{g}^h \in \mathbb{R}^{d \times 1}$. The terms dependent on $t$ and $i$ are effectively separated, enabling linear-time computation through prefix sums.

\subsection{ZeroS Linear Attention: Interaction Between Radial and Angular Components}

The reweighted zero-sum softmax provides strong foundations for linear attention by yielding numerically stable weights (including negative values) with computational simplicity while enabling high-order $(t, i)$ interactions in token mixing. We leverage linear-time logit inputs that depend only on step $i$ and implement effects on different softmax orders through step $t$ gating.

However, our earlier discussion showed that the angle-flipping effect in softmax attention's $\exp(\| \bm{q}_t \| \| \bm{k}_i \| \cos \theta)$ significantly impacts final weights. While reweighted zero-sum softmax effectively models length interactions through $i$-step logits and $t$-step gating, it lacks control over directional influence when measuring vector differences in $(t,i)$ pairs. Since zero sum weights provide inherent stability, no longer need to place $\cos \theta$ in the denominator normalizer, we can directly multiply the angular component ($\cos \theta$) with the reweighted softmax radial component without positivity constraints. This approach enables seamless integration with rotary positional embedding (RoPE) \cite{su2023roformerenhancedtransformerrotary}, making the angle term's role in measuring relative distance more explicit.

\textbf{Zero-Sum Linear Attention (ZeroS)} We use normalized vectors \(\hat{\bm k}_i=\bm k_i/\|\bm k_i\|\) and \(\hat{\bm q}_t=\bm q_t/\|\bm q_t\|\), with $r_{t,i}$ as the radial component and $\cos \theta$ as the angular component. ZeroS produces the output:
$$
\bm{o}_t = \sum_{i=1}^t r_{t,i} \; \cos \theta \; \bm{v}_i, \quad \cos \theta = \hat{\bm{q}}_t \hat{\bm{k}}_i^\top
$$
With RoPE's block-diagonal rotary matrix applied, the angular term becomes $\cos \theta' = \hat{\bm{q}}_t \mathbf{R}_{t-i} \hat{\bm{k}}_i^\top$. Both $r_{t,i}$ and $\cos \theta$ are centered values, preserving zero-sum properties in the weights $r_{t,i} \cos \theta$. Though not strictly positive (unlike traditional radial components), $r_{t,i}$ captures magnitude effects from step $i$, reflecting length-related interactions between $(t,i)$ pairs.

\textbf{Linear‐Time Scan}
With logits that depend only on step \(i\) (e.g.\ \(s_i=-\tfrac{1}{\sqrt d}\,\bm{u}_i\bar{\bm{u}}_i^{\!\top}\) with \(\bm{u}_i=\bm{x}_i\mathbf{W}_u\) and \(\bar{\bm{u}}_i=\tfrac{e^\tau\mu+\sum_{j=1}^i\bm{u}_j}{e^\tau+i}\)), the radial weight at time \(t\) can be decomposed into the full softmax term, the \(0\)th‐order baseline, and the \(1\)st‐order term:
\(
\underbrace{\frac{e^{s_i}}{\sum_{j=1}^t e^{s_j}}}_{\text{Full}}
\;,\;
\underbrace{\frac{1}{t}}_{\text{0th}}
\;,\;
\underbrace{\frac{1}{t}s_i-\frac{1}{t^2}\sum_{j=1}^t s_j}_{\text{1st}}.
\)
Hence the higher‐order zero‐sum residual is \(\text{Full} - \text{0th} - \text{1st}\). We realize ZeroS by gating the first‐order zero‐sum and the higher‐order residual with \(\sigma_t^1=\mathrm{sigmoid}(\bm{x}_t\mathbf{W}_g^1)\) and \(\sigma_t^h=\mathrm{sigmoid}(\bm{x}_t\mathbf{W}_g^h)\), and optionally in the first layer retaining the \(0\)th‐order baseline \(\sigma_t^0=\mathrm{sigmoid}(\bm{x}_t\mathbf{W}_g^0)\) (with fixed $\sigma_t^0=0$ by default). Using normalized directions \(\hat{\bm{q}}_t=\bm{q}_t/\|\bm{q}_t\|\) and \(\hat{\bm{k}}_i=\bm{k}_i/\|\bm{k}_i\|\), we maintain the following prefix scans at step \(t\):
\[
E_t=\sum_{i=1}^t e^{s_i},\quad 
P_t=\sum_{i=1}^t s_i,\quad
\mathbf{F}_t=\sum_{i=1}^t e^{s_i}\,\hat{\bm{k}}_i^{\!\top}\bm{v}_i,\quad
\mathbf{G}_t=\sum_{i=1}^t s_i\,\hat{\bm{k}}_i^{\!\top}\bm{v}_i,\quad
\mathbf{H}_t=\sum_{i=1}^t \hat{\bm{k}}_i^{\!\top}\bm{v}_i.
\]
The output is then a gated activation of these scans by the current step’s angular vector \(\hat{\bm{q}}_t\):
\begin{align*}
\bm{o}_t
&= \hat{\bm{q}}_t \Bigg[
\underbrace{\sigma_t^h\!\left(\frac{1}{E_t}\mathbf{F}_t - \frac{1}{t}\mathbf{G}_t + \Bigl(\frac{P_t}{t^2} - \frac{1}{t}\Bigr)\mathbf{H}_t\right)}_{\text{$\sigma_t^h$ (Full $-$ 0th $-$ 1st)}}
+
\underbrace{\sigma_t^1\!\left(\frac{1}{t}\mathbf{G}_t - \frac{P_t}{t^2}\mathbf{H}_t\right)}_{\text{$\sigma_t^1$ (1st restore)}}
+
\underbrace{\sigma_t^0\,\frac{1}{t}\mathbf{H}_t}_{\text{$\sigma_t^0$ (optional 0th restore)}}
\Bigg] \\[2pt]
&= \hat{\bm{q}}_t\bigl(\alpha_t \mathbf{F}_t + \beta_t \mathbf{G}_t + \gamma_t \mathbf{H}_t\bigr),
\end{align*}
where
\[
\alpha_t=\frac{\sigma_t^h}{E_t},\qquad
\beta_t=\frac{\sigma_t^1-\sigma_t^h}{t},\qquad
\gamma_t=\Bigl(\frac{P_t}{t^2}-\frac{1}{t}\Bigr)\sigma_t^h-\frac{P_t}{t^2}\sigma_t^1+\frac{\sigma_t^0}{t}.
\]
This scan keeps only \(O(d^2)\) state (\(\mathbf{F}_t,\mathbf{G}_t,\mathbf{H}_t\)) and updates in \(O(d^2)\) per step, yielding overall \(O(Nd^2)\) time and \(O(d^2)\) memory while implementing the zero‐sum weighting. Moreover, our reweighted zero-sum approach can also be directly applied to standard softmax attention. See section \ref{ap:zeros-sm} for more details.

\section{Experiments}

\begin{table}
\caption{Evaluation Results of ZeroS on the MAD benchmark.}
\centering
\resizebox{\textwidth}{!}{
\begin{tabular}{l|cccccc|c}
\toprule
Model & Compress & Fuzzy Recall & In-Context Recall & Memorize & Noisy Recall & Selective Copy & Average \\
\midrule
Hyena & 45.2 & 7.90 & 81.7 & 89.5 & 78.8 & 93.1 & 66.0 \\
MultiHead Hyena & 44.8 & 14.4 & 99.0 & 89.4 & 98.6 & 93.0 & 73.2 \\
Mamba & 52.7 & 6.70 & 90.4 & 89.5 & 90.1 & 86.3 & 69.3 \\
GLA & 38.8 & 6.90 & 80.8 & 63.3 & 81.6 & 88.6 & 60.0 \\
DeltaNet & 42.2 & 35.7 & 100 & 52.8 & 100 & 100 & 71.8 \\
LinAttn & 31.1 & 8.15 & 91.0 & 74.9 & 75.6 & 93.1 & 62.3 \\
Transformer & 51.6 & 29.8 & 94.1 & 85.2 & 86.8 & 99.6 & 74.5 \\
\midrule
\textbf{ZeroS} & 44.0 & 14.9 & 99.9 & 88.1 & 96.1 & 97.8 & 73.5 \\
\textbf{ZeroS-SM} & 45.2 & 28.0 & 100 & 84.3 & 96.6 & 98.5 & 75.4 \\
\bottomrule
\end{tabular}
}
\vspace{-5pt}
\label{tab:mad_performance}
\end{table}

\begin{wrapfigure}{r}{0.45\textwidth}
\vspace{-20pt}
  \centering
  \includegraphics[width=0.45\textwidth]{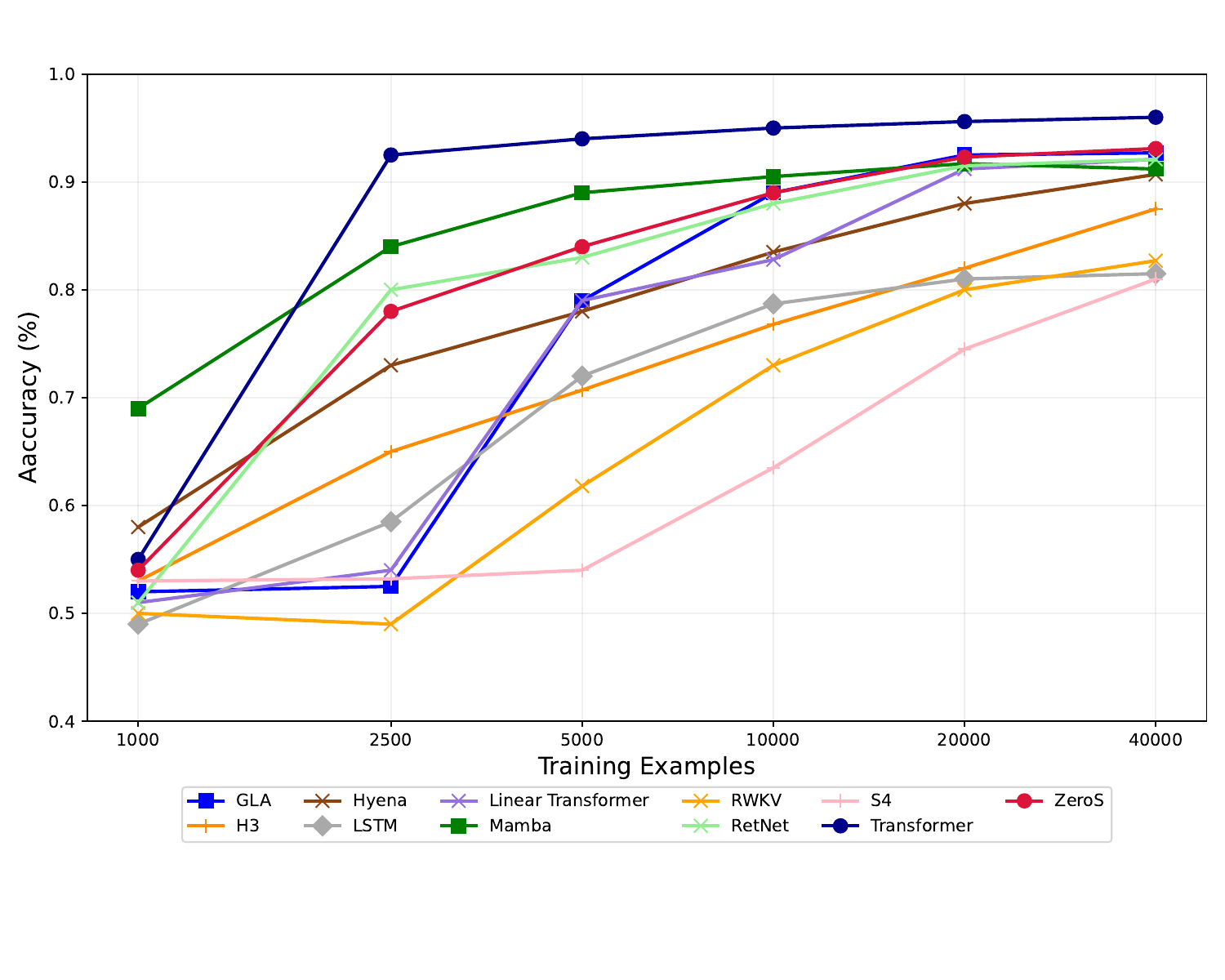}
  \vskip -5pt
  \caption{Evaluation of ZeroS on RegBench.}
  \label{regbench}
\vspace{-15pt}
\end{wrapfigure}

ZeroS's zero-sum formulation enhances the attention layer's expressivity for complex operations, particularly evident in in-context learning tasks \cite{olsson2022incontextlearninginductionheads,NEURIPS2023_b2e63e36}. We evaluate both linear-time ZeroS and quadratic-time ZeroS-SM on recent in-context learning benchmarks, along with experiments on NLP, image, and time series tasks. In all experiments, we directly replaced the multi-head attention module with ZeroS under original benchmark settings, preserving all other components (MLP/GLU, embeddings, hyperparameters) to ensure strict alignment with previous standards.

\begin{wraptable}{r}{0.4\columnwidth}
\caption{Evaluation Results on WikiText}
\vskip 1pt
\centering
\small\resizebox{0.4\textwidth}{!}{
\begin{tabular}{l|ccc}
\toprule
Model & PPL (val) & PPL (test) & Params (M) \\
\midrule
FLASH & 25.92 & 26.7 & 42.17 \\
1+elu & 27.44 & 28.05 & 44.65 \\
Performer & 62.5 & 63.16 & 44.65 \\
cosFormer & 26.53 & 27.06 & 44.65 \\
Syn(D) & 31.31 & 32.43 & 46.75 \\
Syn(R) & 33.68 & 34.78 & 46.75 \\
gMLP & 28.08 & 29.13 & 47.83 \\
S4 & 38.34 & 39.66 & 45.69 \\
DSS & 39.39 & 41.07 & 45.63 \\
GSS & 29.61 & 30.74 & 43.84 \\
RWKV-4 & 24.31 & 25.07 & 46.23 \\
LRU & 29.86 & 31.12 & 46.75 \\
TNN & 23.98 & 24.67 & 48.66 \\
Mamba & 22.58 & 23.19 & 44.99 \\
HGRN2 & 23.1 & 23.73 & 44.66 \\
Transformer & 24.4 & 24.78 & 44.65 \\
\midrule
\textbf{ZeroS} & 23.91 & 24.61 & 46.31 \\
\textbf{ZeroS-SM} & 23.62 & 24.17 & 44.69 \\
\bottomrule
\end{tabular}}
\vspace{-5pt}
\label{tab:wikitext_performance}
\end{wraptable}

We previously described the prefix-sum computation of autoregressive ZeroS. For the encoder-only ZeroS, the summation simply spans all timesteps. We use the causal version of ZeroS for all datasets except image modeling. We provide a more detailed description of the experimental datasets in Appendix \ref{ap:data-description}. For simplicity, we do not apply first-layer 0-th order term addition in our experiments.

\textbf{MAD} \ We evaluate ZeroS on the MAD benchmark \cite{poli2024mechanisticdesignscalinghybrid}, which tests sequence models on in-context tasks. As shown in Table \ref{tab:mad_performance}, ZeroS outperforms other linear-time models (Hyena, Mamba, GLA, DeltaNet, LinAttn \cite{poli2023hyenahierarchylargerconvolutional,gu2023mamba,yang2024gated,yang2024parallelizing}), achieving performance closest to Transformer, while ZeroS-SM further improves upon Transformer's average score. Task-level analysis shows ZeroS significantly outperforms LinAttn on In-Context and Noisy Recall tasks, supporting our hypothesis that zero-sum weights enhance algorithmic abilities. However, on tasks like Compress and Memorize that rely less on complex representations, ZeroS provides minimal gains. Unlike DeltaNet, which actively deletes memory states, ZeroS maintains strong memorization despite using negative weights, indicating that our zero-order modifications preserve sequence memory capacity.

\textbf{MQAR} \ We follow the setup of \cite{arora2023zoologymeasuringimprovingrecall} for the MQAR task, which evaluates models' ability to learn induction heads for in-context associative recall. Using the same hyperparameter sweep, Fig. \ref{fig:mqar_model_comparison} shows ZeroS performs comparably to vanilla attention across most configurations.

\begin{figure*}[!h]
    \centering
    \includegraphics[height=3cm]{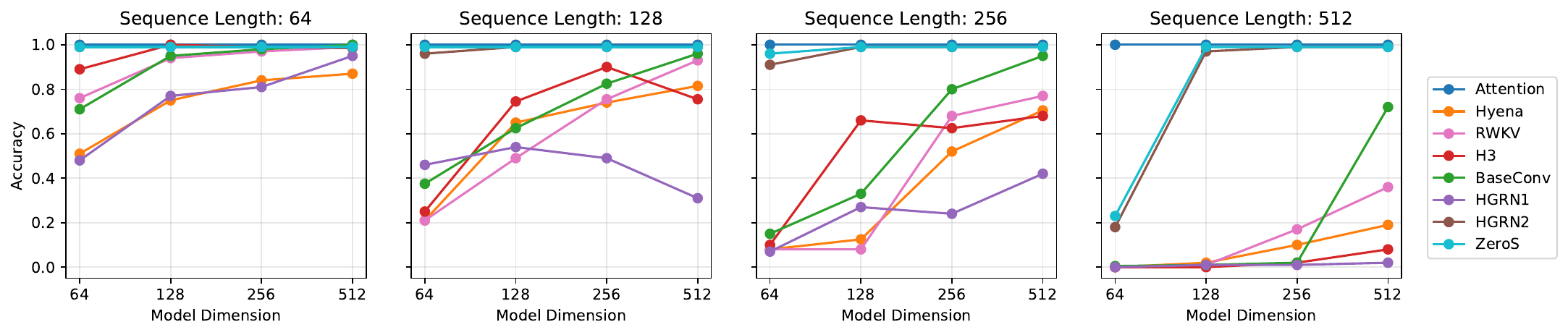}
    \vskip -5pt
    \caption{Performance evaluation on the MQAR benchmark, illustrating the relationship between model dimension (x-axis) and accuracy (y-axis). ZeroS demonstrates consistent performance advantages over other structures across all experimental configurations.}
    \label{fig:mqar_model_comparison}
\end{figure*}

\begin{wrapfigure}{r}{0.55\textwidth}
\vspace{-15pt}
  \centering
  \includegraphics[width=0.55\textwidth]{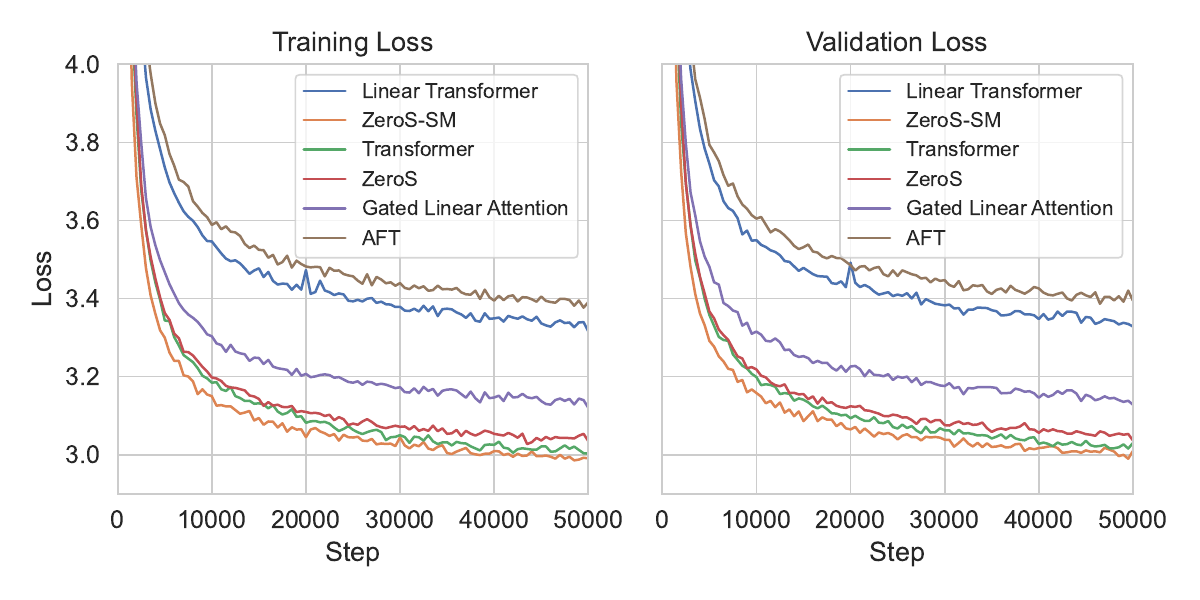}
  \vskip -5pt
  \caption{Performance Evaluation of ZeroS on OWT2}
  \label{openwebtext}
\vspace{-25pt}
\end{wrapfigure}

\textbf{RegBench} \ We evaluate ZeroS on RegBench \cite{akyürek2024incontextlanguagelearningarchitectures} following the original experimental setup (Figure \ref{regbench}). RegBench tests models' ability to infer regular language structures from examples. ZeroS outperforms linear-time baselines including GLA, RetNet \cite{sun2023retentivenetworksuccessortransformer}, and RWKV \cite{peng2023rwkvreinventingrnnstransformer}.

\subsection{Language Modeling}

\begin{wraptable}{r}{0.3\columnwidth}
\vspace{-15pt}
\centering
\renewcommand{\arraystretch}{0.65}
\caption{
    Comparative analysis of image classification performance on ImageNet-1k.
}
\small\resizebox{0.3\textwidth}{!}{\begin{tabular}{lcc}
    \toprule
    & \multicolumn{2}{c}{\textbf{DeiT-Tiny}}\\
    \textbf{Model} & \textbf{Top-1 Acc} & \textbf{Params (M)} \\
    \midrule
    DeiT & 72.20 & 5.7  \\
    TNN & 72.29 & 6.4 \\
    HGRN1 & 74.40 & 6.1 \\
    HGRN2 & 75.39 & 6.1  \\
    \midrule
    \textbf{ZeroS} & 75.51 & 6.0 \\
    \bottomrule
\end{tabular}}
\vspace{-20pt}
\label{table:imagenet1k}
\end{wraptable}

\textbf{WikiText} We conduct language modeling on WikiText-103 following \cite{qin2024hgrn2}'s setup, with results in Table \ref{tab:wikitext_performance}. ZeroS outperforms vanilla Transformer at this smaller scale, demonstrating its efficiency. ZeroS-SM yields further improvements, showing enhanced reasoning capability from the zero-order term removal.

\textbf{OWT2} \ We evaluate ZeroS on OpenWebText2 (OWT2) \cite{gao2020pile800gbdatasetdiverse} using a 12-layer, 768-dimensional GPT-2 architecture with various token layers (see Appendix~\ref{ap:data-description}). Figure~\ref{openwebtext} shows ZeroS tracks much closer to vanilla Transformer than other linear methods like AFT \cite{zhai2021attentionfreetransformer} and GLA, while ZeroS-SM further improves upon vanilla Transformer performance.

\textbf{Image Modeling} \ Following HGRN2 \cite{qin2024hgrn2}, wee valuate ZeroS on ImageNet by replacing the DeiT-Tiny architecture’s softmax attention with our encoder-only implementation. As shown in Table 3, ZeroS outperforms previous
294 methods including TNN \cite{qin2023toeplitz} and HGRN1 \cite{qin2024hgrn2gatedlinearrnns} under comparable parameter budgets.

\textbf{Time Series} Following the setup in \cite{lu2025lineartransformersvarmodels}, we evaluate ZeroS on time series forecasting tasks. ZeroS outperforms both efficient sequence models (GLA, AFT) and domain-specific approaches (iTransformer \cite{liu2024itransformer}, PatchTST \cite{nie2023time}) on most datasets.

\begin{table}[t]
\renewcommand{\arraystretch}{0.8}
\small
\caption{Evaluation of the ZeroS performance on the Time Series Forecasting Benchmark}
\label{fullresults}
\centering
\begin{threeparttable}
\renewcommand{\multirowsetup}{\centering}
\resizebox{\textwidth}{!}{   
\begin{tabular}{lcccccccccccc}
\toprule

Models & \multicolumn{2}{c}{\textbf{ZeroS}} & \multicolumn{2}{c}{GLA} &  \multicolumn{2}{c}{AFT} & \multicolumn{2}{c}{iTransformer}  & \multicolumn{2}{c}{PatchTST} & \multicolumn{2}{c}{DLinear}  \\

\cmidrule(lr){2-3} \cmidrule(lr){4-5}\cmidrule(lr){6-7} \cmidrule(lr){8-9}\cmidrule(lr){10-11}\cmidrule(lr){12-13}

Metric & MSE & MAE & MSE & MAE & MSE & MAE & MSE & MAE & MSE & MAE & MSE & MAE  \\

\midrule

Weather &	0.218 &	0.265 &	0.223 &	0.267 &	0.220 &	0.266 &	0.232 &	0.274 &	0.221 &	0.261 &	0.233 &	0.282 \\
Solar &	0.192 &	0.256 &	0.204 &	0.266 &	0.198 &	0.259 &	0.219 &	0.284 &	0.202 &	0.254 &	0.216 &	0.277 \\
ETTh1 &	0.414 &	0.433 &	0.418 &	0.439 &	0.409 &	0.433 &	0.454 &	0.467 &	0.413 &	0.431 &	0.422 &	0.436 \\
ETTh2 &	0.341 &	0.392 &	0.342 &	0.390 &	0.337 &	0.390 &	0.374 &	0.410 &	0.330 &	0.379 &	0.426 &	0.444 \\
ETTm1 &	0.347 &	0.387 &	0.357 &	0.394 &	0.348 &	0.386 &	0.373 &	0.401 &	0.346 &	0.380 &	0.347 &	0.376 \\
ETTm2 &	0.245 &	0.312 &	0.250 &	0.315 &	0.246 &	0.311 &	0.265 &	0.332 &	0.247 &	0.312 &	0.252 &	0.326 \\
\bottomrule
\end{tabular}  }

\end{threeparttable}
\vspace{-5pt}
\end{table}

\begin{wraptable}{r}{0.5\columnwidth}
\vspace{-20pt}
\caption{Ablation Study on WikiText-103}
\centering
\small
\small\resizebox{0.5\textwidth}{!}{\begin{tabular}{l|ccc}
\toprule
Model & PPL (val) & PPL (test) & Params (M) \\
\midrule
ZeroS & 23.91 & 24.61 & 46.31 \\
ZeroS w/ 0-th & 24.05 & 24.74 & 46.31 \\
ZeroS w/o RWSM & 24.21 & 24.97 & 46.31 \\
ZeroS-SM & 23.62 & 24.17 & 44.69 \\
\bottomrule
\end{tabular}}
\label{tab:wikitext_ablation_study}
\end{wraptable}

\subsection{Ablation Studies}

We conduct ablation studies on MAD and WikiText-103 to analyze key components of ZeroS. Reintroducing the 0-th order softmax term reduces performance on In-Context Recall, Noisy Recall, and WikiText, confirming the representational advantage of zero-sum weights. Replacing the reweighted zero-sum softmax with standard softmax further degrades performance, highlighting the expressive gap between convex combinations and our flexible zero-sum mechanism. Ablating the gating component causes moderate performance drops across most tasks, suggesting it contributes broadly to model flexibility. Finally, removing LayerNorm notably impacts performance on In-Context Recall but not on simpler tasks like Memorize, indicating stable variance is particularly critical for algorithmic reasoning: consistent with normalization's role in linear attention mechanisms. See \S \ref{app:more-benchmarks} for additional baselines and ablations.

\begin{table}
\caption{Ablation Study on the MAD benchmark.}
\centering
\resizebox{\textwidth}{!}{
\begin{tabular}{l|ccccccc}
\toprule
Model & Compress & Fuzzy Recall & In-Context Recall & Memorize & Noisy Recall & Selective Copy & Average \\
\midrule
ZeroS & 44.0 & 14.9 & 99.9 & 88.1 & 96.1 & 97.8 & 73.5 \\
ZeroS w/ 0-th & 42.0 & 10.5 & 91.4 & 85.2 & 90.0 & 97.1 & 69.4 \\
ZeroS w/o RWSM & 36.3 & 10.6 & 91.8 & 81.7 & 89.7 & 95.3 & 67.6 \\
ZeroS w/o Gating & 39.7 & 13.5 & 96.3 & 83.0 & 94.6 & 97.8 & 70.8 \\
ZeroS w/o Norm & 39.1 & 12.3 & 89.0 & 87.0 & 91.7 & 97.1 & 69.4 \\
ZeroS-SM & 45.2 & 28.0 & 100 & 84.3 & 96.6 & 98.5 & 75.4 \\
\bottomrule
\end{tabular}
}
\vspace{-10pt}
\label{tab:mad_ablation_study}
\end{table}

\section{Conclusion and Limitation}
We introduced Zero-Sum Linear Attention (ZeroS), addressing fundamental limitations of linear attention by removing the constant zero-order term from softmax and reweighting the resulting zero-sum residuals. Our approach enables higher-order token interactions while maintaining O(N) complexity, bridging the performance gap between linear and quadratic attention methods. Evaluations across diverse tasks show ZeroS matches or exceeds standard softmax attention while offering significant efficiency advantages, challenging the belief that expressivity-efficiency tradeoffs in attention mechanisms are inevitable.

As for the limitation, our research prioritizes improving attention’s algorithmic expressivity rather than providing engineering optimizations like GPU acceleration implementations found in Mamba or GLA \cite{gu2023mamba,yang2024gated}. Also, our resource constraints prevented large-scale model training and evaluation on LLM benchmarks, which would involve numerous factors. This focused approach allowed us to precisely identify ZeroS’s algorithmic improvements without requiring extensive engineering or computational resources that are typically needed for optimizing large benchmark metrics. Additionally, our evaluation of ZeroS primarily focuses on autoregressive tasks. Future work may explore its capabilities on non-causal tasks to further extend its applicability.


{
\setcitestyle{numbers}
\bibliographystyle{unsrt}
\bibliography{ZeroS.bib}
}

\newpage
\appendix

\section{Technical Appendices and Supplementary Material}

\subsection{Additional Theoretical Discussion}\label{ap:theoretical}

\begin{proposition}[Convex vs.\ Zero‑sum Span]\label{prop:convex-zero}
Let \(v_1,\dots,v_t\in\mathbb R^{d}\) and denote their centroid by 
\(v_{\mathrm{avg}}=\tfrac1t\sum_{i=1}^t v_i\).  
Write the deviation matrix
\[
\Delta V=\left[\,v_1-v_{\mathrm{avg}},\,\dots,\,v_t-v_{\mathrm{avg}}\,\right]\in\mathbb R^{d\times t}.
\]
Define the convex hull and zero‑sum spaces
\[
\mathcal{C}:=\Bigl\{\sum_{i=1}^t\alpha_i v_i :
      \alpha_i\ge 0,\;
      \sum_{i=1}^t\alpha_i=1\Bigr\},
\qquad
\mathcal{Z}:=\Bigl\{\sum_{i=1}^t w_i v_i :
      \sum_{i=1}^t w_i=0\Bigr\}.
\]
Then
\[
\mathcal{C}
  \;=\;
  v_{\mathrm{avg}}
  +\bigl\{\Delta V\,\alpha \;:\; \alpha\in\Delta_{t-1}\bigr\}
  \;\subsetneq\;
  v_{\mathrm{avg}}
  +\bigl\{\Delta V\,w \;:\; w\in\mathbb R^{t},\;
    \mathbf 1^\top w = 0\bigr\}
  \;=\;
  v_{\mathrm{avg}}+\mathcal{Z},
\]
where \(\Delta_{t-1}\) is the \((t\!-\!1)\)-dimensional probability simplex.
\end{proposition}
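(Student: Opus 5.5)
The plan is to reduce everything to the two elementary identities $\Delta V\,\alpha=\sum_i\alpha_i v_i-(\mathbf 1^\top\alpha)\,v_{\mathrm{avg}}$ and its specialization to zero-sum $w$. After that, strictness follows from a bounded-versus-unbounded comparison.

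\emph{Step 1: the two equalities.} For any $\alpha\in\mathbb R^t$,
\[
\Delta V\,\alpha=\sum_{i=1}^t\alpha_i(v_i-v_{\mathrm{avg}})=\sum_{i=1}^t\alpha_i v_i-\Bigl(\sum_{i=1}^t\alpha_i\Bigr)v_{\mathrm{avg}}.
\]
If $\alpha\in\Delta_{t-1}$, the last factor is $1$. Hence $v_{\mathrm{avg}}+\Delta V\alpha=\sum_i\alpha_i v_i$, and this gives $\mathcal C=v_{\mathrm{avg}}+\{\Delta V\alpha:\alpha\in\Delta_{t-1}\}$. If instead $\mathbf 1^\top w=0$, the same identity gives $\Delta V w=\sum_i w_i v_i$. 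Thus $\{\Delta V w:\mathbf 1^\top w=0\}=\mathcal Z$, and the rightmost equality follows.

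\emph{Step 2: inclusion.} Given $\alpha\in\Delta_{t-1}$, set $w=\alpha-\tfrac1t\mathbf 1$, so that $\mathbf 1^\top w=0$. Then
\[
\sum_i w_i v_i=\sum_i\alpha_i v_i-v_{\mathrm{avg}}=\Delta V\alpha .
\]
So every convex deviation is a zero-sum combination. Translating by $v_{\mathrm{avg}}$ gives $\mathcal C\subseteq v_{\mathrm{avg}}+\mathcal Z$. This is exactly the zero-order subtraction $w_i=\alpha_i-\tfrac1t$ used in Corollary~\ref{prop:expressive-zero}.

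\emph{Step 3: strictness (the only real point).} The inclusion can only be strict if the $v_i$ are not all equal. When $v_1=\dots=v_t$, both sides collapse to $\{v_{\mathrm{avg}}\}$. I will therefore state and prove strictness under the nondegeneracy hypothesis already made explicit in Proposition~\ref{lem:convex-vs-zero}.

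Assume $v_j\neq v_k$ for some $j,k$. Then $\mathcal Z$ is a linear subspace containing the nonzero vector $v_j-v_k$, obtained from $w=e_j-e_k$. Hence $\mathcal Z$ contains $\lambda(v_j-v_k)$ for every $\lambda\in\mathbb R$, so it is unbounded. On the other hand, $\mathcal C-v_{\mathrm{avg}}$ is bounded: every element has norm at most $2\max_i\|v_i\|$ by the triangle inequality. Choosing
\[
\lambda>\frac{2\max_i\|v_i\|}{\|v_j-v_k\|}
\]
gives a point $v_{\mathrm{avg}}+\lambda(v_j-v_k)\in v_{\mathrm{avg}}+\mathcal Z$ that lies outside $\mathcal C$. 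This establishes $\subsetneq$.

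I expect no genuine obstacle beyond flagging the degenerate identical-vectors case. An alternative strictness witness would be an extreme reflection such as $2v_j-v_k$, but the boundedness argument is cleaner and dimension-free.
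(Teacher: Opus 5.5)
Your Steps 1 and 2 match the paper's proof sketch: the identities $\sum_i\alpha_i v_i=v_{\mathrm{avg}}+\Delta V\alpha$ for $\alpha\in\Delta_{t-1}$ and $\sum_i w_i v_i=\Delta V w$ for $\mathbf 1^\top w=0$. The genuine difference is in strictness.

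The paper takes the explicit witness $w=(1,-1,0,\dots,0)$. It concludes from $w\notin\Delta_{t-1}$ that $v_{\mathrm{avg}}+\Delta V w=v_{\mathrm{avg}}+v_1-v_2\notin\mathcal C$. That step tacitly needs the coefficients of a point to be unique modulo $\mathbf 1$ (note $\Delta V\mathbf 1=0$), i.e.\ affine independence of the $v_i$. The paper only assumes this later, in Corollary~\ref{cor:expressive}. Without it the witness can fail. For $v_1=(0,0)$, $v_2=(0.01,0)$, $v_3=(10,0)$, $v_4=(0,10)$, the point $v_{\mathrm{avg}}+v_1-v_2\approx(2.49,2.5)$ lies inside the triangle $\mathcal C$.

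Your bounded-versus-unbounded argument needs no uniqueness of representation. It uses only two facts: $\mathcal C-v_{\mathrm{avg}}$ lies in the ball of radius $2\max_i\|v_i\|$, while $\mathcal Z$ contains the whole line through $v_j-v_k\neq 0$. So it proves strictness for arbitrary configurations under the minimal hypothesis that the $v_i$ are not all equal. You are also right to add that hypothesis: for identical $v_i$ (or $t=1$) both sides equal $\{v_{\mathrm{avg}}\}$, so the proposition as literally stated is false there.

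When affine independence does hold, the paper's route gives a natural unit-scale witness, the single contrastive difference $v_1-v_2$. Yours may need a large multiple $\lambda(v_j-v_k)$, but in exchange it is valid in complete generality.
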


\begin{corollary}[Expressive Capacity]\label{cor:expressive}
Assume the \(v_i\) are affinely independent and \(d\ge t-1\), so that
\(\operatorname{rank}\Delta V = t-1\).  Then
\[
\dim(\mathcal{C}) \;=\; t-1,
\qquad
\dim(\mathcal{Z}) \;=\; t-1,
\]
but \(\mathcal{C}\) is bounded, whereas \(\mathcal{Z}\) is an unbounded linear
subspace of the same dimension.  Consequently
\(
  \mathcal{C}\subsetneq v_{\mathrm{avg}}+\mathcal{Z},
\)
and zero‑sum attention can realise outputs unattainable by any convex‑combination
attention.
\end{corollary}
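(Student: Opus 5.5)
The plan is to reduce everything to linear algebra on the deviation matrix \(\Delta V\), using the parametrization already stated in Proposition~\ref{prop:convex-zero}. First I will record two elementary identities. (i) For any \(w\) with \(\mathbf 1^\top w=0\) we have \(\sum_i w_i v_i=\sum_i w_i(v_i-v_{\mathrm{avg}})=\Delta V\,w\), so \(\mathcal Z=\Delta V(\mathbf 1^\perp)\). (ii) For \(\alpha\in\Delta_{t-1}\) we have \(\sum_i\alpha_i v_i-v_{\mathrm{avg}}=\sum_i(\alpha_i-\tfrac1t)v_i\), and the coefficients \(\alpha_i-\tfrac1t\) sum to zero. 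This gives the inclusion \(\mathcal C\subseteq v_{\mathrm{avg}}+\mathcal Z\) directly, independent of Proposition~\ref{prop:convex-zero}.

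Next come the dimension counts. Since \(\Delta V\mathbf 1=\sum_i(v_i-v_{\mathrm{avg}})=0\), we have \(\mathbf 1\in\ker\Delta V\). The hypothesis \(\operatorname{rank}\Delta V=t-1\) then forces \(\ker\Delta V=\mathrm{span}\{\mathbf 1\}\), because affine independence together with \(d\ge t-1\) is exactly what makes this rank attainable. This kernel meets \(\mathbf 1^\perp\) only in \(0\), so \(\Delta V\) restricted to the \((t-1)\)-dimensional hyperplane \(\mathbf 1^\perp\) is injective, and hence \(\dim\mathcal Z=t-1\). For \(\mathcal C\), I interpret \(\dim\) as the dimension of its affine hull. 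The affine hull of the convex hull of \(v_1,\dots,v_t\) equals \(\mathrm{Aff}\{v_i\}=v_{\mathrm{avg}}+\mathcal Z\), as in Proposition~\ref{lem:preserv-affine-hull}. Its direction space is therefore \(\mathcal Z\), which gives \(\dim\mathcal C=t-1\).

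For boundedness, the triangle inequality gives \(\|\sum_i\alpha_i v_i\|\le\sum_i\alpha_i\|v_i\|\le\max_i\|v_i\|=:M\) on \(\mathcal C\). For unboundedness, \(\mathcal Z\) is a linear subspace of dimension \(t-1\ge1\) (when \(t\ge2\)). It therefore contains some \(z\ne0\), and \(\lambda z\in\mathcal Z\) for every \(\lambda\in\mathbb R\). Strictness then follows by choosing \(\lambda\) with \(\|\lambda z\|>M+\|v_{\mathrm{avg}}\|\). The point \(v_{\mathrm{avg}}+\lambda z\) has norm greater than \(M\), so it lies in \(v_{\mathrm{avg}}+\mathcal Z\) but not in \(\mathcal C\). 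Equivalently, it corresponds to a zero-sum weight vector \(w=\alpha-\tfrac1t\mathbf 1\) with some \(\alpha_i<0\), i.e.\ a genuinely signed attention output.

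The main obstacle is not computational but definitional. The phrase "\(\dim(\mathcal C)=t-1\)" must be read as affine dimension, since \(\mathcal C\) is not a subspace. The case \(t=1\) must be excluded or treated separately: there \(\mathcal Z=\{0\}\), both sets collapse to the single point \(\{v_1\}\), and strictness fails. I will state that \(t\ge2\) is assumed, consistent with the "not all identical" condition in Proposition~\ref{lem:convex-vs-zero}, and keep the rank argument careful, since it is the only place the hypotheses are used.
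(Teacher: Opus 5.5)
Your proof is correct, but you obtain the strict inclusion by a different route from the paper. The paper's sketch exhibits the witness \(w=(1,-1,0,\dots,0)\) and infers \(v_{\mathrm{avg}}+\Delta V\,w\notin\mathcal C\) from \(w\notin\Delta_{t-1}\). It handles the dimensions by asserting that ``linear images preserve dimension.'' You instead use the bounded/unbounded dichotomy that the corollary itself announces: \(\mathcal C\) lies in the ball of radius \(\max_i\|v_i\|\), while \(v_{\mathrm{avg}}+\lambda z\in v_{\mathrm{avg}}+\mathcal Z\) leaves that ball for large \(\lambda\).

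Your argument needs only \(\mathcal Z\neq\{0\}\), i.e.\ the \(v_i\) not all equal, so strictness does not depend on affine independence. It also avoids a step the paper's sketch does not actually justify. No zero-sum vector lies in the simplex, and \(w=0\) still gives \(v_{\mathrm{avg}}\in\mathcal C\), so ``\(w\notin\Delta_{t-1}\)'' is not the right test. What must be shown is \(\Delta V w\neq\Delta V\alpha\) for every \(\alpha\in\Delta_{t-1}\). This holds because \(\ker\Delta V=\mathrm{span}\{\mathbf 1\}\) forces \(\alpha=w+\tfrac1t\mathbf 1\), whose second entry \(\tfrac1t-1\) is negative. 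What the paper's route buys, once repaired this way, is an explicit and interpretable signed output (the average shifted by \(v_1-v_2\)) rather than an abstract far-away point.

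Your dimension count is the precise version of the paper's one-line claim. Note that \(\ker\Delta V=\mathrm{span}\{\mathbf 1\}\) follows from \(\Delta V\mathbf 1=0\) together with rank--nullity, not from the attainability remark you attach to it. That kernel gives injectivity on \(\mathbf 1^{\perp}\). Reading \(\dim\mathcal C\) as affine dimension via \(\mathrm{Aff}(\mathrm{Conv}\{v_i\})=\mathrm{Aff}\{v_i\}\) is the right interpretation.

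Your caveat that \(t\ge 2\) is needed is correct and missing from the paper. For \(t=1\), \(\mathcal Z=\{0\}\) is bounded and \(\mathcal C=v_{\mathrm{avg}}+\mathcal Z=\{v_1\}\), so the inclusion is not strict.
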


\begin{proof}[Proof sketch of Proposition~\ref{prop:convex-zero}
            and Corollary~\ref{cor:expressive}]
\leavevmode
\begin{enumerate}
  \item\textbf{Convex representation.}
        For any \(\alpha\in\Delta_{t-1}\),
        \(
          \sum_{i}\alpha_i v_i
          = v_{\mathrm{avg}} + \Delta V\,\alpha.
        \)
  \item\textbf{Zero‑sum representation.}
        If \(w\in\mathbb R^{t}\) satisfies \(\mathbf 1^\top w = 0\),
        then \(\sum_{i}w_i v_i = \Delta V\,w\).
  \item\textbf{Strict inclusion.}
        The vector \(w=(1,-1,0,\dots,0)\) lies in the hyperplane
        \(\mathbf 1^\top w = 0\) but not in the simplex \(\Delta_{t-1}\);
        hence \(v_{\mathrm{avg}} + \Delta V\,w \in v_{\mathrm{avg}}+\mathcal{Z}
        \setminus \mathcal{C}\).
  \item\textbf{Dimensionality.}
        Under affine independence and \(d\ge t-1\), the matrix
        \(\Delta V\) has rank \(t-1\).  Linear images preserve dimension,
        giving the stated dimensions of \(\mathcal{C}\) and \(\mathcal{Z}\)
        and proving the corollary.
\end{enumerate}
\end{proof}

\subsubsection{Proof of Proposition~\ref{lem:convex-vs-zero} (Convex vs.\ Zero-Sum Span).}
Let $\{\bm v_i\}_{i=1}^t\subset\mathbb{R}^d$ and define
\[
\mathcal C \!=\! \Big\{\sum_{i=1}^t \alpha_i \bm v_i : \alpha_i \!\ge\! 0,\; \sum_{i=1}^t \alpha_i \!=\! 1\Big\},\quad
\mathcal Z \!=\! \Big\{\sum_{i=1}^t w_i \bm v_i : \sum_{i=1}^t w_i \!=\! 0\Big\},
\]
and $\bm v_{\mathrm{avg}}=\tfrac1t\sum_{i=1}^t \bm v_i$. For any $\bm y\in\mathcal C$ with weights $\alpha\in\Delta_{t-1}$,
\[
\bm y - \bm v_{\mathrm{avg}}
= \sum_{i=1}^t \Big(\alpha_i - \tfrac1t\Big)\bm v_i,
\qquad\text{with}\quad \sum_{i=1}^t\Big(\alpha_i - \tfrac1t\Big)=0.
\]
Hence the centered convex set equals
\[
\mathcal C_{\mathrm{dev}}
= \Big\{\sum_{i=1}^t w_i \bm v_i : \sum_{i=1}^t w_i=0,\;\; w_i \ge -\tfrac1t\Big\},
\]
which is $\mathcal Z$ with extra lower bounds on the coefficients. Therefore
$\mathcal C_{\mathrm{dev}} \subseteq \mathcal Z$.

To see the inclusion is strict unless all $\bm v_i$ are identical, pick $j\neq k$ with $\bm v_j\neq \bm v_k$ and consider $\bm v_j-\bm v_k\in\mathcal Z$ (take $w_j=1,w_k=-1$, others $0$). If $\bm v_j-\bm v_k\in\mathcal C_{\mathrm{dev}}$, we would have
\[
\alpha_j-\tfrac1t=1,\quad \alpha_k-\tfrac1t=-1,\quad \alpha_i-\tfrac1t=0\ (i\notin\{j,k\}),
\]
implying $\alpha_k=-1+\tfrac1t<0$ for $t\ge 2$, a contradiction. Thus $\bm v_j-\bm v_k\notin\mathcal C_{\mathrm{dev}}$, and $\mathcal C_{\mathrm{dev}}\subsetneq \mathcal Z$ whenever the $\bm v_i$ are not all equal. If all $\bm v_i$ coincide, both sets reduce to $\{0\}$. \qed

\subsubsection{Proof of Corollary~\ref{prop:expressive-zero} (Expressive Gain of Zero-Sum Attention).}
In a residual head $\bm x_t\mapsto \bm x_t+\sum_i w_i\bm v_i$, the deviation (relative to the average direction) produced by softmax weights $\alpha$ is
\[
\sum_i \alpha_i \bm v_i - \bm v_{\mathrm{avg}}\in \mathcal C_{\mathrm{dev}}.
\]
Subtracting the zero-order term corresponds to $w_i=\alpha_i-\tfrac1t$ with $\sum_i w_i=0$, hence deviations lie in $\mathcal Z$. By Proposition~\ref{lem:convex-vs-zero},
$\mathcal C_{\mathrm{dev}}\subsetneq \mathcal Z$ (for non-degenerate $\{\bm v_i\}$), so zero-sum attention strictly enlarges the attainable deviation set and therefore the head’s expressivity. The only removed direction is the uniform average $\bm v_{\mathrm{avg}}$, which can be recovered across heads or layers. \qed

\subsubsection{Proof of Proposition~\ref{lem:preserv-affine-hull} (Preservation of Affine Hull and Expressivity).}
Let $\{\bm v_i\}_{i=1}^t\subset\mathbb{R}^d$, $\bm v_{\mathrm{avg}}=\tfrac1t\sum_{i=1}^t \bm v_i$, and $\Delta_i=\bm v_i-\bm v_{\mathrm{avg}}$.

(i) Full softmax / with zero-order term.
A single head with full softmax produces
\[
\mathcal R_{\mathrm{full}}
=\Big\{\sum_{i=1}^t a_i \bm v_i:\ a_i\ge 0,\ \sum_{i=1}^t a_i=1\Big\}
=\mathrm{Conv}\{\bm v_i\}\subseteq \mathrm{Aff}\{\bm v_i\}.
\]

(ii) Zero-sum (without zero-order term).
If $\sum_{i=1}^t w_i=0$, then
\[
\sum_{i=1}^t w_i \bm v_i
=\sum_{i=1}^t w_i(\bm v_{\mathrm{avg}}+\Delta_i)
=\bm v_{\mathrm{avg}}\!\sum_{i=1}^t w_i+\sum_{i=1}^t w_i \Delta_i
=\sum_{i=1}^t w_i \Delta_i,
\]
hence
\[
\mathcal R_{\mathrm{zero\text{-}sum}}
=\Big\{\sum_{i=1}^t w_i \bm v_i:\ \sum_{i=1}^t w_i=0\Big\}
=\mathrm{Span}\{\Delta_1,\dots,\Delta_t\}.
\]
Conversely, for any $s=\sum_{i=1}^t u_i\,\Delta_i \in \mathrm{Span}\{\Delta_i\}$, let 
$\bar u=\tfrac{1}{t}\sum_{i=1}^t u_i$ and define $w_i=u_i-\bar u$. 
Then $\sum_{i=1}^t w_i=0$ and
$\sum_{i=1}^t w_i \bm v_i
=\sum_{i=1}^t u_i \bm v_i - \bar u \sum_{i=1}^t \bm v_i
=\sum_{i=1}^t u_i (\bm v_i-\bm v_{\mathrm{avg}})
=\sum_{i=1}^t u_i \Delta_i
=s$.
Hence $\mathrm{Span}\{\Delta_i\}\subseteq \mathcal R_{\mathrm{zero\text{-}sum}}$, and thus 
$\mathcal R_{\mathrm{zero\text{-}sum}}=\mathrm{Span}\{\Delta_i\}$.

(iii) Stacking and Minkowski sum.
For any $\bm y\in \mathrm{Aff}\{\bm v_i\}$ we can write
\[
\bm y=\bm v_{\mathrm{avg}}+(\bm y-\bm v_{\mathrm{avg}}),
\]
where $\bm v_{\mathrm{avg}}\in \mathrm{Conv}\{\bm v_i\}$ and, since $\sum_i \Delta_i=0$, we have
$\bm y-\bm v_{\mathrm{avg}}\in \mathrm{Span}\{\Delta_i\}$. Therefore
\[
\mathrm{Conv}\{\bm v_i\}+\mathrm{Span}\{\Delta_i\}
=\mathrm{Aff}\{\bm v_i\}.
\]
Combining (i)–(iii) gives the claimed reachable sets for single heads and their equality to the affine hull when stacked. \qed

\subsubsection{Proof of Lemma~\ref{lem:stability} (Numerical Stability of Zero-Sum Softmax).}
Assume $\|\bm v_i\|\le B$ for all $i$. By the triangle inequality,
\[
\Bigl\|\sum_{i=1}^t w_{t,i}\,\bm v_i\Bigr\|
\le \sum_{i=1}^t |w_{t,i}|\,\|\bm v_i\|
\le B\,t\,\max_i|w_{t,i}|.
\]
By the zero-sum softmax construction (see the main text), under bounded logits we have
\[
|w_{t,i}|\;\le\;\max\!\Bigl\{\tfrac{1}{t},\,\tfrac{|\delta_{t,i}|}{t},\,\tfrac{|\rho_{t,i}|}{2t}\Bigr\},
\qquad
\delta_{t,i}=s_{t,i}-\tfrac{1}{t}\sum_{j=1}^t s_{t,j}, 
\]
and $\delta_{t,i},\rho_{t,i}=O(1)$. Hence $\max_i|w_{t,i}|=O(1/t)$, and therefore
\[
\Bigl\|\sum_{i=1}^t w_{t,i}\,\bm v_i\Bigr\|\;\le\; B\,t\cdot O\!\left(\tfrac{1}{t}\right)=O(B),
\]
which is independent of $t$. \qed

\subsubsection{Proof of Proposition~\ref{prop:lipschitz-updated} (Uniform Lipschitz Bound with square root Decay).}
Let $\bm o_t(\bm x)=t^{-1/2}\sum_{i=1}^t w_{t,i}(\bm x)\,\bm v_i$ with $\sum_{i=1}^t w_{t,i}(\bm x)=0$ and $\|\bm v_i\|\le B$. For any $\bm x,\bm x'$,
\[
\begin{aligned}
\|\bm o_t(\bm x)-\bm o_t(\bm x')\|
&= \frac{1}{\sqrt t}\Bigl\|\sum_{i=1}^t\bigl(w_{t,i}(\bm x)-w_{t,i}(\bm x')\bigr)\bm v_i\Bigr\| \\
&\le \frac{1}{\sqrt t}\sum_{i=1}^t |w_{t,i}(\bm x)-w_{t,i}(\bm x')|\,\|\bm v_i\|
\le \frac{B}{\sqrt t}\sum_{i=1}^t |w_{t,i}(\bm x)-w_{t,i}(\bm x')|.
\end{aligned}
\]
By the Lipschitz assumption on the weights,
$|w_{t,i}(\bm x)-w_{t,i}(\bm x')|\le (L_w/t)\,\|\bm x-\bm x'\|$, hence
\[
\|\bm o_t(\bm x)-\bm o_t(\bm x')\|
\le \frac{B}{\sqrt t}\sum_{i=1}^t \frac{L_w}{t}\,\|\bm x-\bm x'\|
= \frac{B\,L_w}{\sqrt t}\,\|\bm x-\bm x'\|.
\]
Thus the head is uniformly Lipschitz with constant $B L_w/\sqrt t$. \qed

\subsubsection{Implementation of ZeroS Softmax Attention (ZeroS-SM)} \label{ap:zeros-sm}

\textbf{Zero-sum for Standard Softmax Attention (ZeroS-SM)} As shown in Figure \ref{SMarchitecture}, our reweighted zero-sum approach can be directly applied to standard softmax attention using logits $s_{t,i} = \bm{q}_t \bm{k}_i^\top / \sqrt{d}$, with matrix form:
\[
\mathbf S = \frac{1}{\sqrt d}\,\mathbf Q\,\mathbf K^\top + \mathbf M, \ \mathbf A = \mathrm{softmax}(\mathbf S), \ \mathbf u = \bigl[1, \tfrac12, \dots, \tfrac1N\bigr]^\top, \ 
\bar{\mathbf S} = \mathrm{diag}(\mathbf u)\,(\mathbf S\,\mathbf1_N)
\]
\[
\boldsymbol\Delta = \mathrm{diag}(\mathbf u)\,\bigl(\mathbf S - \bar{\mathbf S}\,\mathbf1_N^\top\bigr), \ \boldsymbol\varepsilon = \mathbf A \;-\;\mathrm{diag}(\mathbf u)\,\mathbf1_N\,\mathbf1_N^\top \;-\;\boldsymbol\Delta
\]
\[
\mathbf W = (\mathbf g^1\,\mathbf1_N^\top)\odot\boldsymbol\Delta
            \;+\;(\mathbf g^h\,\mathbf1_N^\top)\odot\boldsymbol\varepsilon, \ \mathbf O = \mathbf W\,\mathbf V
\]

\subsection{Runtime Efficiency of the ZeroS Implementation}
\label{app:zeros-efficiency}

In recurrent (scan) form, ZeroS maintains three $d\times d$ hidden-state bases
\[
e^{s_i}\,\hat{\mathbf{k}}_i^\top \mathbf{v}_i,\quad
s_i\,\hat{\mathbf{k}}_i^\top \mathbf{v}_i,\quad
\hat{\mathbf{k}}_i^\top \mathbf{v}_i,
\]
and reads them out with query-dependent gates; the outputs are summed. While a single fused CUDA kernel is not implemented, we obtain a practical implementation by invoking an existing linear-attention scan three times with different key/value bases:

\begin{verbatim}
# prepare reweighted queries q1, q2, q3 from q ...
out1 = run_linattn(q1, k * s_i_exp, v, mode='fused_chunk')  # e^{s_i}
out2 = run_linattn(q2, k * s_i,      v, mode='fused_chunk')  # s_i
out3 = run_linattn(q3, k,            v, mode='fused_chunk')  # 1
out  = out1 + out2 + out3
\end{verbatim}

We replace the attention layer in a GPT-2 style Transformer (hidden size $=768$, $12$ heads, $12$ layers) with various alternatives and evaluate at sequence length $1024$. Baselines include implementations from the same library: {LinAttn}, {GatedLinAttn}, {HGRN2}, {RWKV6}, RWKV7, and softmax attention (na\"ive and FlashAttention). All runs use a single NVIDIA L40S, batch size $8$, FP32. We report mean latency after warm-up. ``Fwd'' denotes full-sequence inference (no KV/hidden-state cache). As shown in Table \ref{tab:zeros-efficiency}. Under this three-scan implementation, {ZeroS} attains latency, throughput, and memory usage within the range of established linear-attention variants and close to FlashAttention on this setup.

\begin{table}[h!]
\centering
\small
\resizebox{\textwidth}{!}{
\begin{tabular}{lrrrrrrrr}
\hline
\textbf{Model} & \textbf{FwdLat (s)} & \textbf{FwdStd} & \textbf{TrainLat (s)} & \textbf{TrainStd} & \textbf{Thr.Fwd (tok/s)} & \textbf{Thr.Train (tok/s)} & \textbf{MemFwd (GB)} & \textbf{MemTrain (GB)} \\
\hline
Softmax Attn (na\"ive) & 0.1306 & 0.0006 & 0.3334 & 0.0009 & \,\,62{,}740.89 & \,\,24{,}574.18 & 9.61 & 10.34 \\
RWKV7                  & 0.0876 & 0.0016 & 0.2626 & 0.0010 & \,\,93{,}491.90 & \,\,31{,}199.35 & 9.81 & 10.61 \\
RWKV6                  & 0.0761 & 0.0005 & 0.2252 & 0.0010 & 107{,}653.00 & \,\,36{,}382.92 & 9.49 &  9.62 \\
ZeroS         & 0.0720 & 0.0008 & 0.1974 & 0.0011 & 113{,}855.38 & 41{,}491.29 & 7.48 &  7.61 \\
HGRN2                  & 0.0672 & 0.0013 & 0.1480 & 0.0009 & 121{,}955.16 & \,\,55{,}336.55 & 6.14 &  6.43 \\
LinAttn                & 0.0666 & 0.0010 & 0.1477 & 0.0009 & 122{,}949.71 & \,\,55{,}447.86 & 5.79 &  5.90 \\
Softmax Attn (FlashAttn) & 0.0651 & 0.0014 & 0.1473 & 0.0008 & 125{,}836.28 & \,\,55{,}620.75 & 5.45 &  5.56 \\
GatedLinAttn           & 0.0600 & 0.0009 & 0.1331 & 0.0008 & 136{,}633.08 & \,\,61{,}533.68 & 5.74 &  5.89 \\
\hline
\end{tabular}
}
\caption{Latency, throughput, and peak GPU memory on GPT-2 (768/12/12), sequence length $1024$, batch size $8$, FP32 on a single L40S. Fwd = full-sequence inference without caches.}
\label{tab:zeros-efficiency}
\end{table}

\subsection{Additional Baselines and Ablations}
\label{app:more-benchmarks}

\paragraph{Setup.}
We augment the main results with recent sequence-modeling baselines: {Mamba2}, {Hawk}, {GatedDeltaNet}, and {HedgeDog}. The evaluation protocol, datasets, and metrics follow the main text.

\paragraph{Results.}
Table~\ref{tab:extra-benchmarks} reports task accuracies (\%). {ZeroS} attains high scores on {In-Context Recall} and {Noisy Recall} and yields a strong overall average. Ablations indicate that removing the angular component substantially degrades performance; additive positional embeddings help but do not match RoPE; and the default radial scoring (with negative similarity) achieves the best average among radial variants.

\begin{table}[h!]
\centering
\small
\resizebox{\textwidth}{!}{
\begin{tabular}{lrrrrrrr}
\toprule
\textbf{Model} & \textbf{Compress} & \textbf{FuzzyRecall} & \textbf{In-ContextRecall} & \textbf{Memorize} & \textbf{NoisyRecall} & \textbf{SelectiveCopy} & \textbf{Average} \\
\midrule
ZeroS (Lin)          & 44.0 & 14.9 & 99.9 & 88.1 & 96.1 & 97.8 & 73.5 \\
LinAttn              & 33.1 &  8.2 & 91.0 & 74.9 & 75.6 & 93.1 & 62.3 \\
ZeroS (SoftmaxAttn)  & 45.2 & 28.0 & 100.0 & 84.3 & 96.6 & 98.5 & 75.4 \\
SoftmaxAttn          & 51.6 & 29.8 & 94.1 & 85.2 & 86.8 & 99.6 & 74.5 \\
Mamba2               & 43.6 & 21.1 & 96.4 & 86.9 & 96.7 & 93.3 & 73.0 \\
Hawk                 & 47.7 & 13.6 & 93.0 & 91.3 & 93.0 & 77.0 & 64.5 \\
GatedDeltaNet        & 45.0 & 29.8 & 100.0 & 80.2 & 100.0 & 94.3 & 74.9 \\
HedgeDog             & 43.2 & 17.9 & 55.9 & 83.4 & 46.0 & 98.4 & 57.4 \\
\midrule
\multicolumn{8}{l}{\textit{Ablation Study}} \\
ZeroS                & 44.0 & 14.9 & 99.9 & 88.1 & 96.1 & 97.8 & 73.5 \\
w/o Angular          & 39.5 &  8.5 & 42.8 & 54.5 & 44.8 & 63.3 & 42.2 \\
Ang: w/o PosEmb      & 35.8 &  9.4 & 73.3 & 46.2 & 66.2 & 45.8 & 46.1 \\
Ang: additive PosEmb & 38.1 & 14.2 & 94.1 & 86.6 & 87.2 & 93.8 & 69.0 \\
w/o Radial           & 35.9 &  9.6 & 84.8 & 86.3 & 86.5 & 92.3 & 65.9 \\
Rad: $u_i$ (linear proj)  & 41.2 & 15.5 & 91.9 & 88.3 & 86.1 & 97.2 & 70.0 \\
Rad: $u_i$ (quad form)    & 40.9 & 15.4 & 92.6 & 86.6 & 90.8 & 98.5 & 70.8 \\
Rad: $u_i$ (2-distance)   & 40.1 & 14.8 & 97.6 & 82.5 & 93.6 & 97.8 & 71.1 \\
Rad: $u_i$ (averaging)    & 41.0 & 15.0 & 99.9 & 93.3 & 89.6 & 98.5 & 73.0 \\
\bottomrule
\end{tabular}
}
\caption{Additional baselines and ablations on six MAD tasks; higher is better.}
\label{tab:extra-benchmarks}
\end{table}

\subsection{Illustrative Zero-Sum Construction Examples}
\label{app:zeros-examples}

\paragraph{Setup.}
Let $\{\mathbf{v}_i\}_{i=1}^{t}\subset\mathbb{R}^d$. A single softmax-attention layer produces
$\mathbf{o}=\sum_{i}\alpha_i \mathbf{v}_i$ with $\alpha_i\ge 0$ and $\sum_i \alpha_i=1$, hence $\mathbf{o}\in \mathrm{Conv}(\{\mathbf{v}_i\})$.
A single {ZeroS} layer can produce signed, zero-sum combinations
$\mathbf{o}=\sum_{i} w_i \mathbf{v}_i$ with $\sum_i w_i=0$.
Below we list simple sequence-to-sequence mappings that are not representable by a single softmax-attention layer but are representable by a single {ZeroS} layer.

\textbf{Example: Two-token difference.}\;
Target $\mathbf{o}=\mathbf{v}_1-\mathbf{v}_2$.
Softmax requires $\alpha_1=1,\alpha_2=-1$ (invalid). {ZeroS}: $w_1=1$, $w_2=-1$, others $0$.

\textbf{Example: Difference from the mean.}\;
Target $\displaystyle \mathbf{o}=\mathbf{v}_1-\frac{1}{t}\sum_{i=1}^{t}\mathbf{v}_i$.
Softmax needs negative mass on $\{\mathbf{v}_i\}_{i>1}$ (invalid). {ZeroS}: $w_1=1-\tfrac{1}{t}$ and $w_{i>1}=-\tfrac{1}{t}$ (so $\sum_i w_i=0$).

\textbf{Example: Alternating differences.}\;
For even $t$, target $\displaystyle \mathbf{o}=\sum_{i=1}^{t/2}(\mathbf{v}_{2i-1}-\mathbf{v}_{2i})$.
Softmax cannot realize alternating $\pm$ weights in one layer. {ZeroS}: $w_{2i-1}=1$, $w_{2i}=-1$ (others $0$).

\subsection{Additional Dateset Description} \label{ap:data-description}

\subsubsection{MQAR}
We adopt the Multi-Query Associative Recall (MQAR) task introduced by \cite{arora2023zoologymeasuringimprovingrecall} to characterize a model's performance on repeated, input-dependent lookups over a large vocabulary in a single forward pass. In the classic associative recall (AR) problem, we store a small, static dictionary of key-value pairs and issue a single, fixed-position query. MQAR generalizes AR by interleaving multiple key-value pairs, each encoded as two consecutive tokens $(k_j,v_j)$, and allows multiple query tokens anywhere in a sequence of length $N$. Formally, given
$$\mathbf x =(x_0,...,x_{N-1}),\hspace{0.5cm}x_i\in\mathcal V$$
whenever $x_i=k_j$ for some $j<i$, the correct output is
$$y_i=v_j=x_{j+1}$$
and the model must satisfy this for all $1\leq i < N$. By requiring repeated lookups at arbitrary positions, MQAR provides a sharp test of dynamic routing and associative recall, directly contrasting these mechanisms with softmax attention's flexibility and capacity to handle multiple simultaneous queries.

\subsubsection{REGBENCH}
RegBench \cite{akyürek2024incontextlanguagelearningarchitectures} is a synthetic in‐context learning benchmark that evaluates a model’s ability to infer the structure of regular languages from only a few example strings provided in the prompt. Each problem instance presents $K\in [10,20]$ example strings $\{d^{(i)}_1,\dots,d^{(i)}_K\}$ drawn from the same stochastic regular language $L^{(i)}$ defined by a probabilistic finite automaton (PFA). To construct the PFA, RegBench samples a minimal deterministic finite automaton (DFA), the canonical formalization of regular languages. RegBench draws
\[
n \sim \mathrm{Uniform}(4,12), 
\quad
c \sim \mathrm{Uniform}(4,18), 
\quad
m_i \sim \mathrm{Uniform}(1,4),
\]
and then samples a language‐specific alphabet \(\Sigma\) of size \(c\) uniformly without replacement from a global symbol set of size \(c_{\max}=18\).  Define the state set \(\mathcal{S}=\{S_1,\dots,S_n\}\cup\{S_0\}\) with accepting subset \(\mathcal{S}_a=\{S_1,\dots,S_n\}\).  For each \(S_i\), uniformly without replacement select \(m_i\) symbols \(x_j\in\Sigma\) and \(m_i\) target states \(S_j\in\mathcal{S}\setminus\{S_i\}\) to form edges \((S_i,x_j,S_j)\), send all other symbols to \(S_0\), and minimize via Hopcroft’s algorithm to obtain the canonical DFA \(A'\).  The PFA inherits \(A'\)’s topology, assigning
\[
T(S_i,x_j,S_j)=\frac{1}{m_i},\quad
T(S_i,x',S')=0\quad\text{otherwise},
\]
so that $\sum_{a \in \Sigma} \sum_{s' \in \mathcal{S}} T(s, a, s') = 1,\quad \forall\,s \in \mathcal{S}.
$.  From this PFA, \(K\) strings of length \(\ell\sim\mathrm{Uniform}(1,50)\) are sampled from \(S_0\) by simulating \((x_t,S_t)\sim T(S_{t-1},\cdot,\cdot)\) and concatenating \(x_1x_2\cdots x_\ell\).  Models then perform greedy next‐token predictions
\[
\hat x_j \;=\;\arg\max_x\,p_\theta\bigl(x\mid \mathbf d^{(i)}_{<j}\bigr),
\]
and we report \textit{DFA accuracy} as in Akyürek et al. (2024)
\[
\operatorname{accuracy}(p_\theta,L_i)
=\frac{1}{\mathrm{NT}}
\sum_{\mathbf d^{(i)}}\sum_{j}
[\mathbf{1}\!\Bigl[\hat x_j\in\{x':L_i(x'\mid\mathbf d^{(i)}_{<j})>0\}\Bigr]],
\]
where \(\mathrm{NT}\) is the total number of tokens in the test set and $L_i(x'\mid\mathbf d^{(i)}_{<j})$ is the probability of predicting $x'$ following context $\mathbf d^{(i)}_{<j}$ in the language $L_i$. We consider DFA accuracy, the fraction of predictions that correspond to valid transitions in the original DFA, as a direct measure of how faithfully the model has internalized the underlying regular-language structure.

\subsubsection{MAD}
We evaluate our proposed architecture using the Mechanistic Architecture Design (MAD) framework, a recently developed methodology for cost-effective evaluation of deep learning architectures \cite{poli2024mechanisticdesignscalinghybrid}. MAD consists of a suite of capability-targeted benchmarks, including in-context recall, fuzzy recall, selective copying, and compression, that probe fundamental sequence modeling capabilities. This approach has been rigorously validated through extensive experimentation spanning over 500 language models from 70M to 7B parameters, demonstrating a strong correlation between performance on these targeted synthetic tasks and compute-optimal perplexity at scale. Through the employment of MAD, which serves as a reliable predictor of large-scale performance, we identify performance advantages without the need for prohibitive computational resources typically associated with architecture validation.

\begin{wrapfigure}{r}{0.55\textwidth}
\vspace{-15pt}
  \centering
  \includegraphics[width=0.55\textwidth]{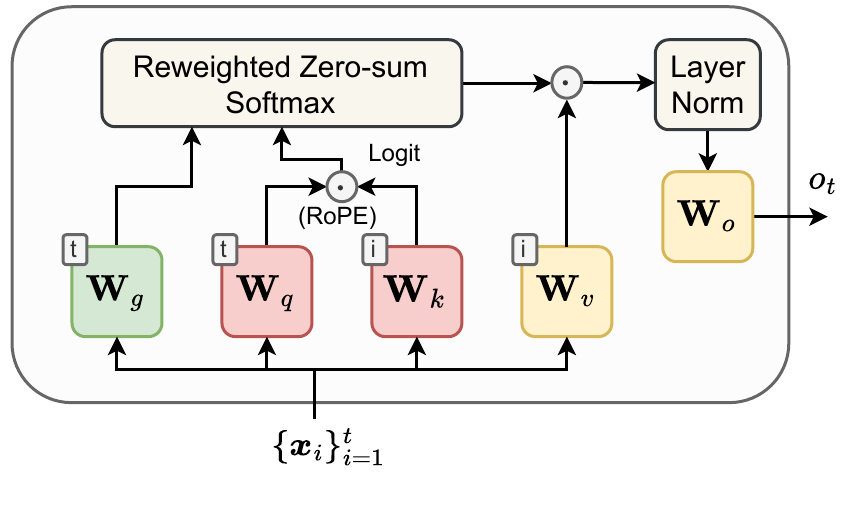}
  \vskip -5pt
  \caption{Block Architecture of The ZeroS-SM Layer}
  \label{SMarchitecture}
\vspace{-15pt}
\end{wrapfigure}

\subsubsection{WikiText-103}

WikiText-103 \cite{merity2016pointersentinelmixturemodels} is a large-scale language modeling dataset of over 103 million words compiled from 23,805 \textit{Good} and 4,790 \textit{Featured} Wikipedia articles that have been reviewed by humans, represent broad coverage, and meet common editorial standards. The dataset has long context windows, a large vocabulary of 267,735 types, and requires preservation of case, punctuation, and numerical information so that WikiText-103 accurately reflects the challenges of real-world text.

\subsubsection{OpenWebText2}
OpenWebText2\footnote{\url{https://openwebtext2.readthedocs.io/en/latest}} is a large-scale, cleaned and deduplicated web-text corpus created as an open reproduction of OpenAI’s WebText dataset: URLs are first extracted from all Reddit submissions with a combined score greater than 3, then scraped, filtered, and deduplicated at both URL and document levels (using MinHash-LSH) to remove low-quality or redundant content. The resulting “plug-and-play” release comprises 17,103,059 documents (around 65.86 GB uncompressed), covering Reddit submissions from 2005 through April 2020, and serves as a high-diversity, up-to-date pretraining corpus for large language models. We use the code environment provided by nanoGPT\footnote{\url{https://github.com/karpathy/nanoGPT}} to implement this dataset training.

\subsubsection{Time Series} We evaluate our module on the time series forecasting benchmark datasets below, following the experimental setup of \cite{lu2025lineartransformersvarmodels}. \textbf{(1) Weather} \citep{wu2022autoformer}\footnote{\url{https://www.bgc-jena.mpg.de/wetter/}}: 21 meteorological variables (e.g., temperature, humidity) collected every 10 minutes in 2020 from a weather station in Germany. \textbf{(2) Solar} \citep{lai2018modeling}\footnote{\url{http://www.nrel.gov/grid/solar-power-data.html}}: Solar power outputs recorded every 10 minutes in 2006 from 137 photovoltaic plants in the U.S. \textbf{(3) ETT} \citep{zhou2021informer}\footnote{\url{https://github.com/zhouhaoyi/ETDataset}}: Transformer load and temperature data sampled at 15-minute (ETTm1/ETTm2) and hourly (ETTh1/ETTh2) intervals from July 2016 to July 2018, including 7 key operational features.

\subsection{Additional Description of Experimental Settings}
Our detailed experimental setup is available at the provided code repository. All experiments introduced in this paper can be run on a single Nvidia RTX 4090. For faster training, we parallelize experiments across multiple GPUs. In all benchmarks, we replace the multi-head attention layers with ZeroS layers without modifying any other settings. We do not apply first-layer 0-th order term correction or the $1/\sqrt{t}$ variance scaling in any of our experiments.

\subsection{Impact Statement}

This paper introduces an efficient attention mechanism for transformer-based models. As a fundamental architectural improvement, ZeroS primarily affects upstream model capabilities rather than specific applications. The positive impacts include potential reductions in computational costs when processing long sequences. Like most foundational ML research, this work could indirectly contribute to both beneficial and potentially harmful applications depending on how downstream models implement it. However, as an architectural component rather than a deployed system, ZeroS itself poses minimal direct societal concerns.


\newpage
\section*{NeurIPS Paper Checklist}

The checklist is designed to encourage best practices for responsible machine learning research, addressing issues of reproducibility, transparency, research ethics, and societal impact. Do not remove the checklist: {\bf The papers not including the checklist will be desk rejected.} The checklist should follow the references and follow the (optional) supplemental material.  The checklist does NOT count towards the page
limit. 

Please read the checklist guidelines carefully for information on how to answer these questions. For each question in the checklist:
\begin{itemize}
    \item You should answer \answerYes{}, \answerNo{}, or \answerNA{}.
    \item \answerNA{} means either that the question is Not Applicable for that particular paper or the relevant information is Not Available.
    \item Please provide a short (1–2 sentence) justification right after your answer (even for NA). 
\end{itemize}

{\bf The checklist answers are an integral part of your paper submission.} They are visible to the reviewers, area chairs, senior area chairs, and ethics reviewers. You will be asked to also include it (after eventual revisions) with the final version of your paper, and its final version will be published with the paper.

The reviewers of your paper will be asked to use the checklist as one of the factors in their evaluation. While "\answerYes{}" is generally preferable to "\answerNo{}", it is perfectly acceptable to answer "\answerNo{}" provided a proper justification is given (e.g., "error bars are not reported because it would be too computationally expensive" or "we were unable to find the license for the dataset we used"). In general, answering "\answerNo{}" or "\answerNA{}" is not grounds for rejection. While the questions are phrased in a binary way, we acknowledge that the true answer is often more nuanced, so please just use your best judgment and write a justification to elaborate. All supporting evidence can appear either in the main paper or the supplemental material, provided in appendix. If you answer \answerYes{} to a question, in the justification please point to the section(s) where related material for the question can be found.

IMPORTANT, please:
\begin{itemize}
    \item {\bf Delete this instruction block, but keep the section heading ``NeurIPS Paper Checklist"},
    \item  {\bf Keep the checklist subsection headings, questions/answers and guidelines below.}
    \item {\bf Do not modify the questions and only use the provided macros for your answers}.
\end{itemize}


\begin{enumerate}

\item {\bf Claims}
    \item[] Question: Do the main claims made in the abstract and introduction accurately reflect the paper's contributions and scope?
    \item[] Answer: \answerYes{} 
    \item[] Justification: The main claims made in the abstract and introduction accurately reflect the paper's scope, as shown in section 2-4. 
    \item[] Guidelines:
    \begin{itemize}
        \item The answer NA means that the abstract and introduction do not include the claims made in the paper.
        \item The abstract and/or introduction should clearly state the claims made, including the contributions made in the paper and important assumptions and limitations. A No or NA answer to this question will not be perceived well by the reviewers. 
        \item The claims made should match theoretical and experimental results, and reflect how much the results can be expected to generalize to other settings. 
        \item It is fine to include aspirational goals as motivation as long as it is clear that these goals are not attained by the paper. 
    \end{itemize}

\item {\bf Limitations}
    \item[] Question: Does the paper discuss the limitations of the work performed by the authors?
    \item[] Answer: \answerYes{} 
    \item[] Justification: Yes, as shown in last section of the main text.
    \item[] Guidelines:
    \begin{itemize}
        \item The answer NA means that the paper has no limitation while the answer No means that the paper has limitations, but those are not discussed in the paper. 
        \item The authors are encouraged to create a separate "Limitations" section in their paper.
        \item The paper should point out any strong assumptions and how robust the results are to violations of these assumptions (e.g., independence assumptions, noiseless settings, model well-specification, asymptotic approximations only holding locally). The authors should reflect on how these assumptions might be violated in practice and what the implications would be.
        \item The authors should reflect on the scope of the claims made, e.g., if the approach was only tested on a few datasets or with a few runs. In general, empirical results often depend on implicit assumptions, which should be articulated.
        \item The authors should reflect on the factors that influence the performance of the approach. For example, a facial recognition algorithm may perform poorly when image resolution is low or images are taken in low lighting. Or a speech-to-text system might not be used reliably to provide closed captions for online lectures because it fails to handle technical jargon.
        \item The authors should discuss the computational efficiency of the proposed algorithms and how they scale with dataset size.
        \item If applicable, the authors should discuss possible limitations of their approach to address problems of privacy and fairness.
        \item While the authors might fear that complete honesty about limitations might be used by reviewers as grounds for rejection, a worse outcome might be that reviewers discover limitations that aren't acknowledged in the paper. The authors should use their best judgment and recognize that individual actions in favor of transparency play an important role in developing norms that preserve the integrity of the community. Reviewers will be specifically instructed to not penalize honesty concerning limitations.
    \end{itemize}

\item {\bf Theory assumptions and proofs}
    \item[] Question: For each theoretical result, does the paper provide the full set of assumptions and a complete (and correct) proof?
    \item[] Answer: \answerYes{} 
    \item[] Justification: Yes, the paper make sure the assumptions and proof are accurate.
    \item[] Guidelines:
    \begin{itemize}
        \item The answer NA means that the paper does not include theoretical results. 
        \item All the theorems, formulas, and proofs in the paper should be numbered and cross-referenced.
        \item All assumptions should be clearly stated or referenced in the statement of any theorems.
        \item The proofs can either appear in the main paper or the supplemental material, but if they appear in the supplemental material, the authors are encouraged to provide a short proof sketch to provide intuition. 
        \item Inversely, any informal proof provided in the core of the paper should be complemented by formal proofs provided in appendix or supplemental material.
        \item Theorems and Lemmas that the proof relies upon should be properly referenced. 
    \end{itemize}

    \item {\bf Experimental result reproducibility}
    \item[] Question: Does the paper fully disclose all the information needed to reproduce the main experimental results of the paper to the extent that it affects the main claims and/or conclusions of the paper (regardless of whether the code and data are provided or not)?
    \item[] Answer: \answerYes{} 
    \item[] Justification: Yes. Please see the provided anonymous code repository.
    \item[] Guidelines:
    \begin{itemize}
        \item The answer NA means that the paper does not include experiments.
        \item If the paper includes experiments, a No answer to this question will not be perceived well by the reviewers: Making the paper reproducible is important, regardless of whether the code and data are provided or not.
        \item If the contribution is a dataset and/or model, the authors should describe the steps taken to make their results reproducible or verifiable. 
        \item Depending on the contribution, reproducibility can be accomplished in various ways. For example, if the contribution is a novel architecture, describing the architecture fully might suffice, or if the contribution is a specific model and empirical evaluation, it may be necessary to either make it possible for others to replicate the model with the same dataset, or provide access to the model. In general. releasing code and data is often one good way to accomplish this, but reproducibility can also be provided via detailed instructions for how to replicate the results, access to a hosted model (e.g., in the case of a large language model), releasing of a model checkpoint, or other means that are appropriate to the research performed.
        \item While NeurIPS does not require releasing code, the conference does require all submissions to provide some reasonable avenue for reproducibility, which may depend on the nature of the contribution. For example
        \begin{enumerate}
            \item If the contribution is primarily a new algorithm, the paper should make it clear how to reproduce that algorithm.
            \item If the contribution is primarily a new model architecture, the paper should describe the architecture clearly and fully.
            \item If the contribution is a new model (e.g., a large language model), then there should either be a way to access this model for reproducing the results or a way to reproduce the model (e.g., with an open-source dataset or instructions for how to construct the dataset).
            \item We recognize that reproducibility may be tricky in some cases, in which case authors are welcome to describe the particular way they provide for reproducibility. In the case of closed-source models, it may be that access to the model is limited in some way (e.g., to registered users), but it should be possible for other researchers to have some path to reproducing or verifying the results.
        \end{enumerate}
    \end{itemize}

\item {\bf Open access to data and code}
    \item[] Question: Does the paper provide open access to the data and code, with sufficient instructions to faithfully reproduce the main experimental results, as described in supplemental material?
    \item[] Answer: \answerYes{} 
    \item[] Justification: Yes, we have listed all the code environments used in the experiment, which includes an introduction to the data that was applied.
    \item[] Guidelines:
    \begin{itemize}
        \item The answer NA means that paper does not include experiments requiring code.
        \item Please see the NeurIPS code and data submission guidelines (\url{https://nips.cc/public/guides/CodeSubmissionPolicy}) for more details.
        \item While we encourage the release of code and data, we understand that this might not be possible, so “No” is an acceptable answer. Papers cannot be rejected simply for not including code, unless this is central to the contribution (e.g., for a new open-source benchmark).
        \item The instructions should contain the exact command and environment needed to run to reproduce the results. See the NeurIPS code and data submission guidelines (\url{https://nips.cc/public/guides/CodeSubmissionPolicy}) for more details.
        \item The authors should provide instructions on data access and preparation, including how to access the raw data, preprocessed data, intermediate data, and generated data, etc.
        \item The authors should provide scripts to reproduce all experimental results for the new proposed method and baselines. If only a subset of experiments are reproducible, they should state which ones are omitted from the script and why.
        \item At submission time, to preserve anonymity, the authors should release anonymized versions (if applicable).
        \item Providing as much information as possible in supplemental material (appended to the paper) is recommended, but including URLs to data and code is permitted.
    \end{itemize}

\item {\bf Experimental setting/details}
    \item[] Question: Does the paper specify all the training and test details (e.g., data splits, hyperparameters, how they were chosen, type of optimizer, etc.) necessary to understand the results?
    \item[] Answer: \answerYes{} 
    \item[] Justification: Yes, we fully inherited all the code from the benchmark code environment used, without making any modifications to the hyperparameters. This paper, as a simple extension to the softmax attention and linear attention modules, does not specify any additional hyperparameters.
    \item[] Guidelines:
    \begin{itemize}
        \item The answer NA means that the paper does not include experiments.
        \item The experimental setting should be presented in the core of the paper to a level of detail that is necessary to appreciate the results and make sense of them.
        \item The full details can be provided either with the code, in appendix, or as supplemental material.
    \end{itemize}

\item {\bf Experiment statistical significance}
    \item[] Question: Does the paper report error bars suitably and correctly defined or other appropriate information about the statistical significance of the experiments?
    \item[] Answer: \answerYes{} 
    \item[] Justification: We fully rely on the scores generated by the referenced benchmark experimental environments. In experiments such as MAD and RegBench, they conduct large-scale hyperparameter searches and select the optimal results. Error bars are not applicable to these scores.
    \item[] Guidelines:
    \begin{itemize}
        \item The answer NA means that the paper does not include experiments.
        \item The authors should answer "Yes" if the results are accompanied by error bars, confidence intervals, or statistical significance tests, at least for the experiments that support the main claims of the paper.
        \item The factors of variability that the error bars are capturing should be clearly stated (for example, train/test split, initialization, random drawing of some parameter, or overall run with given experimental conditions).
        \item The method for calculating the error bars should be explained (closed form formula, call to a library function, bootstrap, etc.)
        \item The assumptions made should be given (e.g., Normally distributed errors).
        \item It should be clear whether the error bar is the standard deviation or the standard error of the mean.
        \item It is OK to report 1-sigma error bars, but one should state it. The authors should preferably report a 2-sigma error bar than state that they have a 96\% CI, if the hypothesis of Normality of errors is not verified.
        \item For asymmetric distributions, the authors should be careful not to show in tables or figures symmetric error bars that would yield results that are out of range (e.g. negative error rates).
        \item If error bars are reported in tables or plots, The authors should explain in the text how they were calculated and reference the corresponding figures or tables in the text.
    \end{itemize}

\item {\bf Experiments compute resources}
    \item[] Question: For each experiment, does the paper provide sufficient information on the computer resources (type of compute workers, memory, time of execution) needed to reproduce the experiments?
    \item[] Answer: \answerYes{} 
    \item[] Justification: Yes, all tasks used in this paper can be trained on the single Nvidia RTX 4090 GPU that we used. We accelerated the experiments by running multiple tasks in parallel, and we did not have any large-scale tasks that involved using multiple GPUs for distributed training for a single task.
    \item[] Guidelines:
    \begin{itemize}
        \item The answer NA means that the paper does not include experiments.
        \item The paper should indicate the type of compute workers CPU or GPU, internal cluster, or cloud provider, including relevant memory and storage.
        \item The paper should provide the amount of compute required for each of the individual experimental runs as well as estimate the total compute. 
        \item The paper should disclose whether the full research project required more compute than the experiments reported in the paper (e.g., preliminary or failed experiments that didn't make it into the paper). 
    \end{itemize}
    
\item {\bf Code of ethics}
    \item[] Question: Does the research conducted in the paper conform, in every respect, with the NeurIPS Code of Ethics \url{https://neurips.cc/public/EthicsGuidelines}?
    \item[] Answer: \answerYes{} 
    \item[] Justification: Yes, this study fully complies with the Code of Ethics.
    \item[] Guidelines:
    \begin{itemize}
        \item The answer NA means that the authors have not reviewed the NeurIPS Code of Ethics.
        \item If the authors answer No, they should explain the special circumstances that require a deviation from the Code of Ethics.
        \item The authors should make sure to preserve anonymity (e.g., if there is a special consideration due to laws or regulations in their jurisdiction).
    \end{itemize}

\item {\bf Broader impacts}
    \item[] Question: Does the paper discuss both potential positive societal impacts and negative societal impacts of the work performed?
    \item[] Answer: \answerYes{} 
    \item[] Justification: Yes, we include a impact statement section in the appendix.
    \item[] Guidelines:
    \begin{itemize}
        \item The answer NA means that there is no societal impact of the work performed.
        \item If the authors answer NA or No, they should explain why their work has no societal impact or why the paper does not address societal impact.
        \item Examples of negative societal impacts include potential malicious or unintended uses (e.g., disinformation, generating fake profiles, surveillance), fairness considerations (e.g., deployment of technologies that could make decisions that unfairly impact specific groups), privacy considerations, and security considerations.
        \item The conference expects that many papers will be foundational research and not tied to particular applications, let alone deployments. However, if there is a direct path to any negative applications, the authors should point it out. For example, it is legitimate to point out that an improvement in the quality of generative models could be used to generate deepfakes for disinformation. On the other hand, it is not needed to point out that a generic algorithm for optimizing neural networks could enable people to train models that generate Deepfakes faster.
        \item The authors should consider possible harms that could arise when the technology is being used as intended and functioning correctly, harms that could arise when the technology is being used as intended but gives incorrect results, and harms following from (intentional or unintentional) misuse of the technology.
        \item If there are negative societal impacts, the authors could also discuss possible mitigation strategies (e.g., gated release of models, providing defenses in addition to attacks, mechanisms for monitoring misuse, mechanisms to monitor how a system learns from feedback over time, improving the efficiency and accessibility of ML).
    \end{itemize}
    
\item {\bf Safeguards}
    \item[] Question: Does the paper describe safeguards that have been put in place for responsible release of data or models that have a high risk for misuse (e.g., pretrained language models, image generators, or scraped datasets)?
    \item[] Answer: \answerNA{} 
    \item[] Justification: This paper poses no such risks.
    \item[] Guidelines:
    \begin{itemize}
        \item The answer NA means that the paper poses no such risks.
        \item Released models that have a high risk for misuse or dual-use should be released with necessary safeguards to allow for controlled use of the model, for example by requiring that users adhere to usage guidelines or restrictions to access the model or implementing safety filters. 
        \item Datasets that have been scraped from the Internet could pose safety risks. The authors should describe how they avoided releasing unsafe images.
        \item We recognize that providing effective safeguards is challenging, and many papers do not require this, but we encourage authors to take this into account and make a best faith effort.
    \end{itemize}

\item {\bf Licenses for existing assets}
    \item[] Question: Are the creators or original owners of assets (e.g., code, data, models), used in the paper, properly credited and are the license and terms of use explicitly mentioned and properly respected?
    \item[] Answer: \answerYes{} 
    \item[] Justification: Yes, we have provided a complete list of all environments used in the experimental section, where all related assets information can be found.
    \item[] Guidelines:
    \begin{itemize}
        \item The answer NA means that the paper does not use existing assets.
        \item The authors should cite the original paper that produced the code package or dataset.
        \item The authors should state which version of the asset is used and, if possible, include a URL.
        \item The name of the license (e.g., CC-BY 4.0) should be included for each asset.
        \item For scraped data from a particular source (e.g., website), the copyright and terms of service of that source should be provided.
        \item If assets are released, the license, copyright information, and terms of use in the package should be provided. For popular datasets, \url{paperswithcode.com/datasets} has curated licenses for some datasets. Their licensing guide can help determine the license of a dataset.
        \item For existing datasets that are re-packaged, both the original license and the license of the derived asset (if it has changed) should be provided.
        \item If this information is not available online, the authors are encouraged to reach out to the asset's creators.
    \end{itemize}

\item {\bf New assets}
    \item[] Question: Are new assets introduced in the paper well documented and is the documentation provided alongside the assets?
    \item[] Answer: \answerYes{} 
    \item[] Justification: Yes, we have presented the complete code we used in an anonymous code repository.
    \item[] Guidelines:
    \begin{itemize}
        \item The answer NA means that the paper does not release new assets.
        \item Researchers should communicate the details of the dataset/code/model as part of their submissions via structured templates. This includes details about training, license, limitations, etc. 
        \item The paper should discuss whether and how consent was obtained from people whose asset is used.
        \item At submission time, remember to anonymize your assets (if applicable). You can either create an anonymized URL or include an anonymized zip file.
    \end{itemize}

\item {\bf Crowdsourcing and research with human subjects}
    \item[] Question: For crowdsourcing experiments and research with human subjects, does the paper include the full text of instructions given to participants and screenshots, if applicable, as well as details about compensation (if any)? 
    \item[] Answer: \answerNA{} 
    \item[] Justification: The paper does not involve crowdsourcing nor research with human subjects.
    \item[] Guidelines:
    \begin{itemize}
        \item The answer NA means that the paper does not involve crowdsourcing nor research with human subjects.
        \item Including this information in the supplemental material is fine, but if the main contribution of the paper involves human subjects, then as much detail as possible should be included in the main paper. 
        \item According to the NeurIPS Code of Ethics, workers involved in data collection, curation, or other labor should be paid at least the minimum wage in the country of the data collector. 
    \end{itemize}

\item {\bf Institutional review board (IRB) approvals or equivalent for research with human subjects}
    \item[] Question: Does the paper describe potential risks incurred by study participants, whether such risks were disclosed to the subjects, and whether Institutional Review Board (IRB) approvals (or an equivalent approval/review based on the requirements of your country or institution) were obtained?
    \item[] Answer: \answerNA{} 
    \item[] Justification: The paper does not involve crowdsourcing nor research with human subjects.
    \item[] Guidelines:
    \begin{itemize}
        \item The answer NA means that the paper does not involve crowdsourcing nor research with human subjects.
        \item Depending on the country in which research is conducted, IRB approval (or equivalent) may be required for any human subjects research. If you obtained IRB approval, you should clearly state this in the paper. 
        \item We recognize that the procedures for this may vary significantly between institutions and locations, and we expect authors to adhere to the NeurIPS Code of Ethics and the guidelines for their institution. 
        \item For initial submissions, do not include any information that would break anonymity (if applicable), such as the institution conducting the review.
    \end{itemize}

\item {\bf Declaration of LLM usage}
    \item[] Question: Does the paper describe the usage of LLMs if it is an important, original, or non-standard component of the core methods in this research? Note that if the LLM is used only for writing, editing, or formatting purposes and does not impact the core methodology, scientific rigorousness, or originality of the research, declaration is not required.
    \item[] Answer: \answerNA{} 
    \item[] Justification: The core method development in this research does not involve LLMs.
    \item[] Guidelines:
    \begin{itemize}
        \item The answer NA means that the core method development in this research does not involve LLMs as any important, original, or non-standard components.
        \item Please refer to our LLM policy (\url{https://neurips.cc/Conferences/2025/LLM}) for what should or should not be described.
    \end{itemize}

\end{enumerate}

\end{document}